\theoremstyle{plain}
\newtheorem{assumption}{Assumption}
\newcolumntype{I}{!{\vrule width 2pt}}
\newlength\savedwidth
\newlength\savewidth
\newtheorem{proposition}{Proposition}
\newtheorem{definition}{Definition}
\theoremstyle{definition}
\begin{document}
\title{Domain-Invariant Classifier Learning for Domain Generalization via Posterior Distribution Alignment}
\title{Domain-Invariant Classifier Learning via Constrained Maximum Cross-Domain Likelihood}
\title{Constrained Maximum Cross-Domain Likelihood for Domain Generalization}
%
%
%
%

\author{Jianxin Lin,~
	Yongqiang Tang,~
	Junping Wang
	and Wensheng Zhang~
	
	\IEEEcompsocitemizethanks{
		\IEEEcompsocthanksitem J. Lin, J. Wang and W. Zhang are with the Research Center of Precision Sensing and Control, Institute of Automation, Chinese Academy of Sciences, Beijing, 100190, China,  and also with the School of Artificial Intelligence, University of Chinese Academy of Sciences, Beijing, 100049, China (e-mail: linjianxin2020@ia.ac.cn; junping.wang@ia.ac.cn; zhangwenshengia@hotmail.com).
		\IEEEcompsocthanksitem Y. Tang is with the Research Center of Precision Sensing and Control, Institute of Automation, Chinese Academy of Sciences, Beijing, 100190, China
		(e-mail: yongqiang.tang@ia.ac.cn).
	}
\thanks{}}

%
%

\markboth{ }%
{Shell \MakeLowercase{\textit{et al.}}: Bare Demo of IEEEtran.cls for Computer Society Journals}
%


\IEEEtitleabstractindextext{%
\begin{abstract}
	As a recent noticeable topic, domain generalization aims to learn a generalizable model on multiple source domains, which is expected to perform well on unseen test domains. Great efforts have been made to learn domain-invariant features by aligning distributions across domains. However, existing works are often designed based on some relaxed conditions which are generally hard to satisfy and fail to realize the desired joint distribution alignment. In this paper, we propose a novel domain generalization method, which originates from an intuitive idea that a domain-invariant classifier can be learned by minimizing the KL-divergence between posterior distributions from different domains. To enhance the generalizability of the learned classifier, we formalize the optimization objective as an expectation computed on the ground-truth marginal distribution. Nevertheless, it also presents two obvious deficiencies, one of which is the side-effect of entropy increase in KL-divergence and the other is the unavailability of ground-truth marginal distributions.
	For the former, we introduce a term named maximum in-domain likelihood to maintain the discrimination of the learned domain-invariant representation space. For the latter, we approximate the ground-truth marginal distribution with source domains under a reasonable convex hull assumption.	Finally, a Constrained Maximum Cross-domain Likelihood (CMCL) optimization problem is deduced, by solving which the joint distributions are naturally aligned. An alternating optimization strategy is carefully designed to approximately solve this optimization problem. Extensive experiments on four standard benchmark datasets, {\it \textbf{i.e.}}, Digits-DG, PACS, Office-Home and miniDomainNet, highlight the superior performance of our method.
\end{abstract}


\begin{IEEEkeywords}
	Domain generalization, domain adaptation, distribution shift, domain-invariant representation, joint distribution alignment.
\end{IEEEkeywords}}

\maketitle

\IEEEdisplaynontitleabstractindextext

%
\IEEEpeerreviewmaketitle


\section{Introduction} \label{section:introduction}
\IEEEPARstart{D}{eep} learning methods have achieved remarkable success in computer vision tasks under the assumption that train data and test data follow the same distribution. Unfortunately, this important assumption does not hold in real-world applications \cite{UDA-Long-2016}. The distribution shift between train data and test data, which are widespread in various vision tasks, is unpredictable and not even static, thus hindering the application of deep learning in reliability-sensitive scenarios. For example, in the field of medical image processing, image data from different hospitals follow different distributions due to discrepancies in imaging protocol, device vendors and patient populations \cite{DGMI-Li-2020}. Hence, the models trained on data from one hospital often suffer from performance degradation when tested in another hospital owing to the distribution shift.

To tackle the distribution shift problem, considerable efforts have been made in domain adaptation and domain generalization. Domain adaptation assumes that the target domain is accessible and attempt to align the distributions between the source domain and the target domain. However, in the setting of domain adaptation, the model inevitably needs to be retrained when the distribution of the target domain changes, which can be time-consuming and cumbersome \cite{dsa-tmi-2020}. More importantly, in many cases, there is no way to access the target domain in advance. Fortunately, domain generalization has been proposed to improve the generalization ability of models in out-of-distribution scenarios given multiple source domains, where the target domain is inaccessible  \cite{ifr-wang-2013}. 

As an active research area, many domain generalization methods have been proposed. 
Let $X$ denote an input variable, {\it i.e.}, an image, $Z=F(X)$ denote the feature extracted from $X$ by a feature extractor $F(\cdot)$ and $Y$ denote an output variable {\it i.e.}, a label. An effective and general solution to domain generalization is learning a domain-invariant representation space where the joint distribution $P(Z,Y)$ across all source domains keeps consistent \cite{ifr-wang-2013, mmdaae-cvpr-2018, Li_TDC-cian-eccv-2018, Li-TDC-cir-AAAI-2018}. Along this line, some works \cite{ifr-wang-2013, bound-2021-arxiv} try to align the marginal distribution $P(Z)$ among domains assuming that the posterior distribution $P(Y|Z)$ is stable across domains. Problematically, there is no guarantee that $P(Y|Z)$ will be invariant when aligning $P(Z)$ \cite{mbdg-nips-2021}, \cite{re-icml-2021}. 
Some methods \cite{cdann-eccv-2018} attempt to align the class-conditional distribution $P(Z|Y)$.
According to $P(Z,Y)=P(Z|Y)P(Y)$, only if the categorical distribution $P(Y)$ keeps invariant across domains, aligning the class-conditional distributions could achieve domain-invariant joint distribution \cite{Li-TDC-cir-AAAI-2018}. But this requirement is difficult to meet in practical applications.

More recently, the domain-invariant classifier, or the invariant predictor, has attracted much interest \cite{icp-jci-2018, irm-2019-arxiv, mip-2020-arxiv, ip-nips-2021, iib-AAAI-2022}. In essence, these works are performing posterior distribution alignment.
Invariant Risk Minimization (IRM) \cite{irm-2019-arxiv} seeks an invariant causal predictor, which is a simultaneously optimal classifier for all environments (domains). IRM is formalized as a hard-to-solve bi-leveled optimization problem. The invariant causal predictor realizes the conditional expectation $\mathbb{E}[Y|Z]$ alignment across domains. It is a coarse posterior distribution alignment due to the insufficiency of the conditioned expectation. Robey {\it et al} \cite{mbdg-nips-2021} propose a novel definition of invariance called G-invariance, which requires that the classifier should hold invariant prediction after $X$ is transformed to any another domain by a domain transformation model $G$. Li {\it et al} \cite{iib-AAAI-2022} propose a new formulation called Invariant Information Bottleneck (IIB), which  achieves the domain-invariant classifier by minimizing the mutual information between $Y$ and domain label given $Z$. Despite the brilliant achievements, the above methods do not take marginal distribution alignment into consideration and thus fail to realize the desired joint distribution alignment. 
In order to ensure that the joint distribution is invariant across domains, both $P(Z)$ and $P(Y|Z)$ must be considered \cite{olirda-icml-2019}. 

In this paper, we propose a novel domain generalization method that can jointly align the posterior distribution and the marginal distribution. Specifically, we formalize a general optimization objective, in which for any given sample, except for the routine empirical risk minimization, the Kullback-Leibler (KL) divergence \cite{kl-ams-1951} between posterior distributions from different domains is also minimized so that the domain-invariant classifier can be learned. To enhance the generalization ability of the learned classifier, the optimization objective is designed as an expectation computed on the ground-truth marginal distribution.
Unfortunately, the above optimization problem still has two deficiencies that must be overcome. The first issue lies in the side-effect of KL-divergence which tends to enlarge the entropy of posterior distributions. To tackle this issue, we add a new term named maximum in-domain likelihood into the overall optimization objective, such that the discrimination of the learned domain-invariant feature space is reinforced. The second issue is that the ground-truth marginal distribution is not available directly. In light of this, we propose to approximate the real-world marginal distribution with source domains under a reasonable convex hull assumption. Eventually, a concise and intuitive optimization problem namely Constrained Maximum Cross-domain Likelihood (CMCL) is deduced, by solving which we can learn a domain-invariant representation space where the joint distributions across domains are naturally aligned.

The major contributions of our paper can be summarized as follows: 

\begin{itemize}	
\item [1)] We propose a new formulation for domain generalization, which minimizes the expectation of KL-divergence between posterior distributions from different domains. We innovatively compute the expectation on the ground-truth marginal distribution, such that the generalizability of the learned model can be enhanced.

\item [2)] A constrained maximum cross-domain likelihood optimization problem is deduced by adding an objective term of maximum in-domain likelihood and a constraint of marginal distribution alignment. The former eliminates the side-effect brought by minimizing KL-divergence, and the latter makes it possible to approximate the ground-truth marginal distribution with source domains.

\item [3)] An effective alternating optimization strategy with multiple optimization stages is elaborately developed to solve the maximum cross-domain likelihood problem. Comprehensive experiments are conducted on four widely used datasets and the results  demonstrate that our CMCL achieves superior performance on unseen domains.
\end{itemize}	


\section{Related Works} \label{section:related work}
In this section, we review the related works dealing with the domain (distribution) shift problem in deep learning, which can be divided into two categories, including domain adaptation and domain generalization.
\subsection{Domain Adaptation} 
Domain adaptation aims to tackle the domain shift between a source domain and a particular target domain \cite{progressive-tnnls-2019} \cite{nem-tnnls-2020}. The goal of domain adaptation is to train models making full use of a large amount of labeled data from a source domain to perform well on the unlabeled target domain.
Most existing domain adaptation methods focus on aligning distributions between the source domain and target domain \cite{graph-tcsvt-2022}. They can be mainly divided into two categories: discrepancy measuring based methods and domain adversarial based methods. 

Discrepancy measuring based methods employ different metrics to measure the distribution disparities and then minimize them, {\it e.g.}, Maximum Mean Discrepancy (MMD) \cite{mmd}, Central Moment Discrepancy (CMD) \cite{cmd-iclr-2017}, Wasserstein distance \cite{wd-aaai-2020}. 
Deep domain confusion \cite{deepmmd-2014-arxiv} employs MMD to align marginal distributions in the deep representation space. Deep CORAL \cite{dcoral-eccv-2016} and CMD \cite{cmd-iclr-2017} align marginal distributions with moment matching. 
Joint MMD \cite{djdn-icml-2017} is proposed to align the joint distributions considering the distribution shifts may stem from joint distributions.
Domain adversarial based methods use domain discriminators to minimize the distance between distributions \cite{dann-icml-2015}. Feature extractors are optimized to confuse the discriminators so that the divergence of distributions is reduced.
Domain-adversarial neural network \cite{dann-icml-2015} is proposed to align marginal distributions by adversarial learning.
Multi-adversarial domain adaptation \cite{mada-aaai-2018} considers the alignment of multi-mode distributions, {\it i.e.}, class-conditional distributions, instead of marginal distributions.
Zuo {\it et al} \cite{ajada-tcsvt-2022} concatenate features and corresponding labels together, and feed them into a domain classifier, then the joint distributions are aligned in an adversarial training manner.

The difference between domain adaptation and domain generalization lies in the accessibility to the target domain. The former focuses on the alignment between the given source domain and target domain, but the latter focuses more on the generalizability on unseen test domains.

\subsection{Domain Generalization} 
Domain generalization aims to train models on  several source domains and test them on unseen domain \cite{wdg-tnnls-2017, emv-tnnls-2018}. Existing works of domain generalization carry out the research mainly from three aspects, including  learning strategy, data augmentation and domain invariant representation.

Learning strategy based methods mainly design special learning strategies to enhance generalizability.
Some works employ meta learning to address domain generalization, which randomly split the source domains into meta-train and meta-test to simulate the domain shift.
Balaji {\it et al} \cite{metareg-nips-2018} train a regularizer through meta learning to capture the notion of domain generalization, which is parameterized by a neural network. Dou {\it et al} \cite{masf-nips-2019} propose a model-agnostic learning paradigm based meta learning to enhance the generalizability of learned features. Global inter-class relationships, local class-specific cohesion and separation of sample features are also considered to regularize the semantic structure of the feature space.
In addition to meta learning, Distributionally Robust Optimization (DRO) \cite{dro-2016-arxiv} is also used for domain generalization, which trains models by minimizing the worst-case loss over pre-defined groups. Sagawa {\it et al} \cite{dro-arxiv-2019} find that coupling DRO with stronger regularization achieves higher worst-case accuracy in the over-parameterized regime.

The core idea of data augmentation based methods is to increase the diversity of training data. 
MixStyle \cite{mixstyle-iclr-2021} is motivated that the visual domain is closely related to image style, which is encoded by feature statistics. The domain diversity can be increased by randomly combining feature statistics between two training instances. Deep Domain-Adversarial Image Generation (DDAIG) \cite{digits-dg-AAAI-2020} is proposed to fool the domain classifier by augmenting images. A domain transformation network is designed to automatically change image style. Seo {\it et al} \cite{dson-eccv-2020} propose a Domain-Specific Optimized Normalization (DSON) to remove domain-specific style. 
Wang {\it et al} \cite{style-tcsvt-2022} design a feature-based style randomization module, which randomizes image style by introducing random noise into feature statistics.
These style augmentation based methods actually exploit the prior knowledge about domain shift, that is, the difference across source domains lies in image style. Though they work well in existing benchmarks, style augmentation based methods would probably fail when the domain shift is caused by other potential factors. Methods which do not rely on prior knowledge deserve further study.

Domain-invariant representation based methods often achieve domain invariance by aligning distributions of different domains as they did in domain adaptation. Li {\it et al} \cite{mmd-aae-cvpr-2018} impose MMD to an adversarial autoencoder to align the marginal distributions $P(Z)$ among domains, and the aligned distribution is matched with a pre-defined prior distribution by adversarial training. 
Motiian {\it et al} \cite{ccsa-iccv-2017} try to align the class-conditional distributions $P(Z|Y)$ for finer alignment. 
However, class-conditional distributions alignment based methods hardly deal with the domain shift caused by the label shift, which requires that categorical distribution $P(Y)$ remains unchanged among domains. Another important branch attempts to achieve domain-invariant representation via domain-invariant classifier learning. IRM \cite{irm-2019-arxiv} tries to learn a domain-invariant classifier by constraining that the classifier is simultaneously optimal for all domains. But this optimization problem is hard to solve. 
Our method CMCL learns domain-invariant classifier via posterior distribution alignment, an effective alternating optimization strategy is proposed to solve our optimization problem leading to excellent performance. 
Zhao {\it et al} \cite{dger-nips-2020} propose an entropy regularization term to align posterior distributions. According to our analysis, the proposed entropy term is a side-effect of minimizing KL-divergence, severely damaging classification performance. In our method, a term of maximum in-domain likelihood is proposed to eliminate this side-effect.

\section{Proposed Method} \label{section:formulation}
\subsection{Overview}
{In this paper, we focus on domain generalization for image classification. Suppose the sample and label spaces are represented by ${\mathcal{X}}$ and ${\mathcal{Y}}$ respectively, then a domain can be represented by a joint distribution defined on ${\mathcal{X}} \times {\mathcal{Y}}$. There are $N$ datasets $\mathcal{D} = \{\mathcal{S}^i = \{(x_j^i, y_j^i)\}_{j=1}^{M_i}\}_{i=1}^N$ sampled from domains with different distributions $\{ P^i(X, Y) \}_{i=1}^N$, where $M_i$ denotes the number of samples of dataset $\mathcal{S}^i$, $X\in \mathcal{X}$ and $Y \in \mathcal{Y}$. }
Let $P(X, Y)$ denote the ground-truth joint distribution in the real world. As shown in Figure \ref{fig:domains}, we suppose that $P(X, Y)$ yields distributions of training domains $\{ P^i(X, Y) \}_{i=1}^N$ and distribution of unseen domain $P^u(X, Y)$, with different domain shift due to different selection bias.
\begin{figure}[!htbp]
\setlength{\abovecaptionskip}{0pt}
\setlength{\belowcaptionskip}{0pt}
\renewcommand{\figurename}{Figure}
\centering
\includegraphics[width=0.3\textwidth]{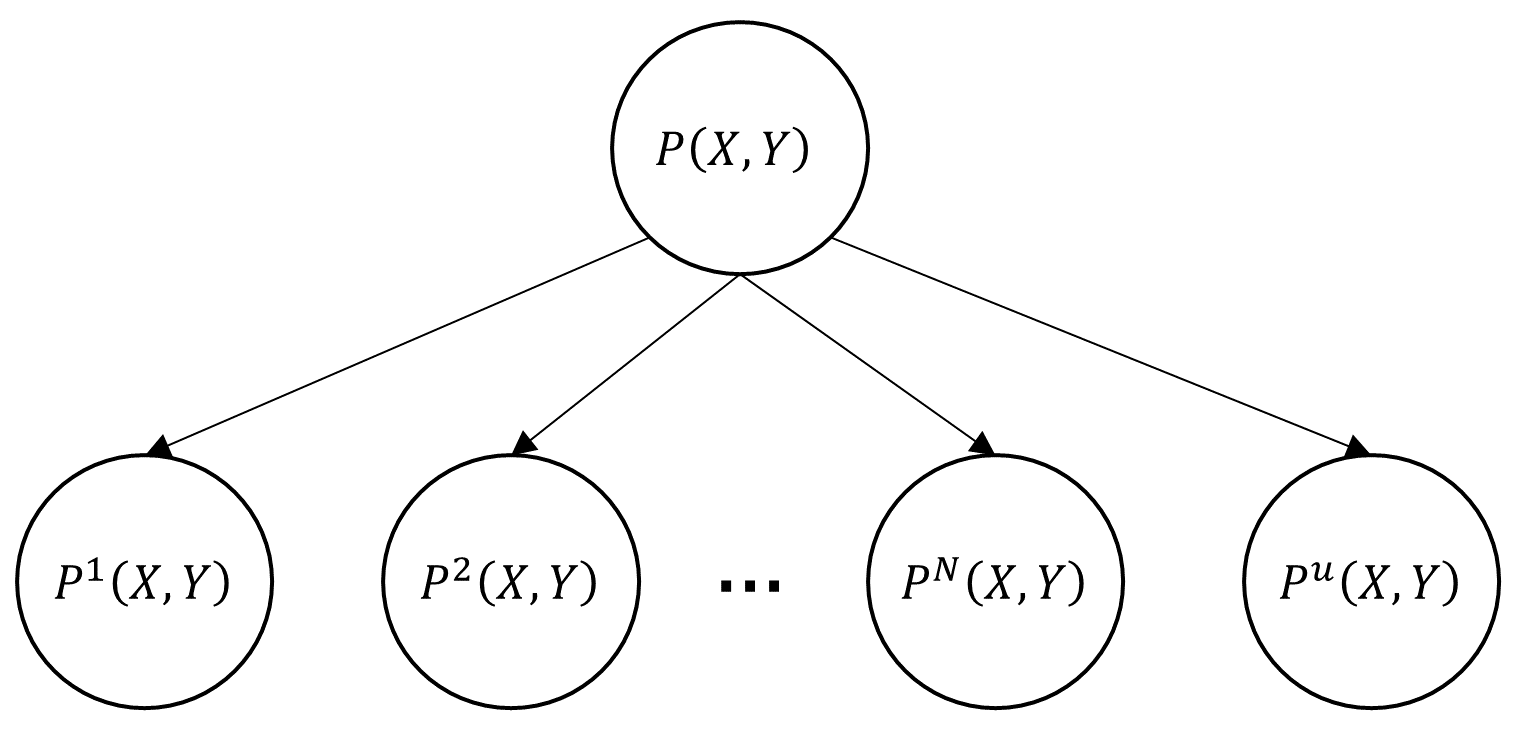}
\caption{Illustration of the generation process of domain-specific distributions \cite{bound-2021-arxiv}.}
\label{fig:domains}
\end{figure}

Given several training domains following different distributions, domain generalization aims to learn a model which is expected to overcome the domain shift and maintain its performance on unseen domains.
{
In order to overcome the distribution shift across domains, we try to learn a {\it domain-invariant representation space} in which the joint distributions of different domains are aligned. 

\begin{definition}[\textbf{Domain-Invariant Representation}]
	\label{def:dir} 
	Let $\mathcal{E}$ be a set of all possible domains. $F(\cdot): {\mathcal{X}} \times {\mathcal{Y}} \rightarrow \mathbb{R}^d$ is a feature mapping function that transforms raw input to the domain-invariant representation space. A representation space is domain-invariant if
	\begin{equation}\label{fml:dir}
		\forall{i\neq j \in \mathcal{E}} \quad P^i\left(Z, Y\right) = P^j\left(Z, Y\right)
	\end{equation}
	where $Z=F(X)$.
\end{definition}

To obtain the domain-invariant representation space, we firstly focus on aligning the posterior distribution from the perspective of {\it domain-invariant classifier} learning.

\begin{definition}[\textbf{Domain-Invariant Classifier}]
	\label{def:CMCL} Given a particular representation space, a domain-invariant classifier is simultaneously Bayes optimal classifier on any domain, which can be obtained when posterior distributions of different domains are aligned:
	\begin{equation}\label{fml:CMCL}
		\forall{i\neq j \in \mathcal{E}} \quad P^i\left(Y|Z\right) = P^j\left(Y|Z\right)
	\end{equation}
\end{definition}

We propose an optimization problem to learn the domain-invariant classifier, which minimizes the KL-divergence between posterior distributions of different domains and maximizes the discrimination of the in-domain feature space (see Section \ref{sec:Maximum In-Domain Likelihood}). The optimization objective is formalized as an expectation of the KL-divergence computed on ground-truth marginal distribution $P(Z)$ to enhance the generalizability of the domain-invariant classifier on unseen domains. Due to the unavailability of ground-truth marginal distribution $P(Z)$, a marginal distribution alignment constraint is proposed to approximate $P(Z)$ by $\{P^i(Z) \}_{i=1}^N$ under a convex hull assumption. Finally, a constrained maximum cross-domain likelihood optimization problem is formalized (see Section \ref{sec:Marginal Distribution Alignment Constraint}). Joint distributions are naturally aligned after solving this constrained optimization problem.
An alternating optimization strategy is proposed to solve this constrained optimization problem (see Section \ref{sec:aos}). The pipeline of the proposed alternating optimization process is illustrated in Figure \ref{fig:CMCL}.

\subsection{Constrained Maximum Cross-Domain Likelihood}\label{sec:objective formalization}
The typical classifier of traditional deep learning assumes that samples follow independent identically distribution and  tries to minimize the following objective:
\begin{equation}\label{fml:erm}
	\begin{aligned}
		\mathop{\min}_F\  - \sum_{i=1}^N \mathbb{E}_{P^i\left(Z,Y\right)}\left[\log P^g\left(Y|Z \right) \right]
	\end{aligned}
\end{equation}
where $Z=F(X)$ denotes the feature of raw input $X$, $P^g(Y|Z)$ denotes the global classifier trained with the data in all source domains. $\mathbb{E}_{P(Z, Y)}[\cdot]$ denotes an expectation computed over the distribution $P(Z, Y)$, {\it i.e.}, $\int P(Z,Y)[\cdot]dZdY$. Eq. (\ref{fml:erm}) is a regular empirical risk and can be regarded as a term of maximum likelihood, which ignores the fact that the data collected from different environments (domains) generally present distribution shift. 

In order to learn a domain-invariant classifier with better generalization ability on unseen domains, in this study,  we propose to minimize the KL-divergence between posterior distributions of different domains as follows:
\begin{equation}\label{fml:klobjective}
	\begin{aligned}
		\mathop{\min}_F\ &\mathbb{E}_{P\left(Z\right)}\left[\sum_{i \neq j} \text{KL}\left(P^i\left(Y|Z\right) \parallel P^j\left(Y|Z\right)\right) \right]\\
		& - \sum_{i=1}^N \mathbb{E}_{P^i\left(Z,Y\right)}\left[\log P^g\left(Y|Z \right) \right]
	\end{aligned}
\end{equation}
where $P\left(Z\right)$ denotes the ground-truth marginal distribution in the real world. 
The first term of the above formula means that a representation space is optimized hoping that all domain-specific posterior distributions can be the same for any given sample sampled from the ground-truth marginal distribution. 

Note that the expectation calculated on the ground-truth marginal distribution makes the optimization objective more general instead of being limited to source domains. If the expectation is calculated on source domains, the alignment of posterior distribution can only be achieved on limited several source domains. To generalize to unseen domains, the ideal optimization object should be an expectation calculated on unseen test distributions. 
An unseen test domain, which is yielded from the ground-truth distribution with a selection bias leading to domain shift, is naturally near to the ground-truth distribution. The distribution shift between unseen test domain and the ground-truth marginal distribution may be small than that between unseen domain and source domains. So the ground-truth marginal distribution is a reasonable substitute for arbitrary unseen test distributions, and hope that the learned classifier can generalize well to unseen test domains. 

\subsubsection{Maximum In-Domain Likelihood}\label{sec:Maximum In-Domain Likelihood}
However, minimizing the KL-divergence directly would produce a side-effect that can seriously damage the classification performance. To illustrate this more clearly, we divide KL-divergence  into two terms as follows:
\begin{equation}\label{fml:regklobjective}
	\begin{aligned}
		&\text{KL}\left(P^i\left(Y|Z\right) \parallel P^j\left(Y|Z\right)\right)   \\
		=& \underbrace{\mathbb{E}_{P^i\left(Y|Z\right)}\left[\log P^i\left(Y|Z\right) \right]}_\text{Negative\ Entropy}
		- \mathbb{E}_{P^i\left(Y|Z\right)} \left[\log  P^j\left(Y|Z\right)\right]
	\end{aligned}
\end{equation}	
When minimizing the KL-divergence, the first term is also minimized, which is essentially maximum entropy. 
Greater entropy means greater prediction uncertainty, which is contrary to the goal of the classification task. 
To solve this problem, another optimization objective is proposed:
\begin{equation}\label{fml:original-objective}
	\begin{split}
		\mathop{\min}_F \sum_{i \neq j} \Big( &\mathbb{E}_{P\left(Z\right)}\left[\text{KL}\left(P^i\left(Y|Z\right) \parallel P^j\left(Y|Z\right)\right)\right]  \\
		&\underbrace{-  \mathbb{E}_{P^i\left(Z,Y\right)}\left[\log P^i\left(Y|Z\right)\right]}_\text{Maximum In-Domain Likelihood}\Big)
		\\
		- \sum_{i=1}^N & \mathbb{E}_{P^i\left(Z,Y\right)}\left[\log P^g\left(Y|Z \right)\right]
	\end{split}
\end{equation}
A new term is proposed, which maximizes the posterior probability of the labeled data $(Z, Y)$ sampled from each domain. This term aims to maintain the discrimination of the learned representation space. Actually, it is essentially a maximum in-domain likelihood objective. This term is obviously different from the third term, which is a maximum global likelihood objective. The former measures the in-domain likelihood on domain-specific distributions, while the latter measures the global likelihood on the global distribution by aggregating images from all source domains.
Next, we introduce the following necessary definition for further analyzing the optimization problem in Eq.(\ref{fml:original-objective}).
\begin{definition}[\textbf{Marginal Distribution Error}]
	\label{def:ef} 
	In the representation space, let $P(Z)$ be the ground-truth marginal distribution. For the marginal distribution $P^i(Z)$ in each source domain, $0\le i \le N$, there exists an distribution error  $\Delta^i(Z)$ such that $\Delta^i(Z) = P(Z) - P^i(Z)$.
\end{definition}
The formulation in Eq.(\ref{fml:original-objective}) can be further decomposed as:
\begin{equation}\label{fml:objective}
	\begin{aligned}
		& \sum_{i \neq j} \Big(- \mathbb{E}_{P^i\left(Z,Y\right)} \left[\log P^j\left(Y|Z\right) \right]\\
		&\quad \quad \ + \int \Delta^i(Z) \text{KL}\left(P^i\left(Y|Z\right) \parallel P^j\left(Y|Z\right)\right) dZ\Big) \\
		&- \sum_{i=1}^N \mathbb{E}_{P^i\left(Z,Y\right)} \left[\log P^g\left(Y|Z \right)\right] \\
	\end{aligned}
\end{equation}
We provide the detailed derivation of Eq.(\ref{fml:objective}) in supplementary materials.
As shown above, the proposed new term of maximum in-domain likelihood eliminates the side-effect of minimizing KL-divergence. Original optimization objective in Eq.(\ref{fml:original-objective}) is transformed into a new form in Eq.(\ref{fml:objective}). 

\subsubsection{Marginal Distribution Alignment Constraint}\label{sec:Marginal Distribution Alignment Constraint}
Due to the unavailability of ground-truth marginal distribution, there is no way to optimize the integral term $ \int \Delta^i(Z) \text{KL}\left(P^i\left(Y|Z\right) \parallel P^j\left(Y|Z\right)\right) dZ\Big)$ in Eq.(\ref{fml:objective}) directly. Hence we introduce a new reasonable assumption which is critical for distribution alignment based domain generalization.
\begin{assumption}[\textbf{Inner Point of the Convex Hull}]
	\label{ass:mda}
	Let a set of marginal distributions of source domains in representation space be denoted as $\mathcal{M} = \{ P^i(Z)\}_{i=1}^N$. The convex hull of the set $\mathcal{M}$ is a set of all convex combinations of distributions in $\mathcal{M}$:
	\begin{equation}
		\Lambda = \left\{\sum_{i}^N \pi_i P^i(Z) \left\vert P^i(Z) \in \mathcal{M} , \pi_i\ge0, \sum_{i=1}^N\pi_i=1\right\}\right. \\
	\end{equation}
	The ground-truth marginal distribution is always a inner point of the convex hull:
	\begin{equation}
		P(Z) \in \Lambda
	\end{equation}
\end{assumption}
\begin{figure}[t]
	\setlength{\abovecaptionskip}{0pt}
	\setlength{\belowcaptionskip}{0pt}
	\renewcommand{\figurename}{Figure}
	\centering
	\includegraphics[width=0.35\textwidth]{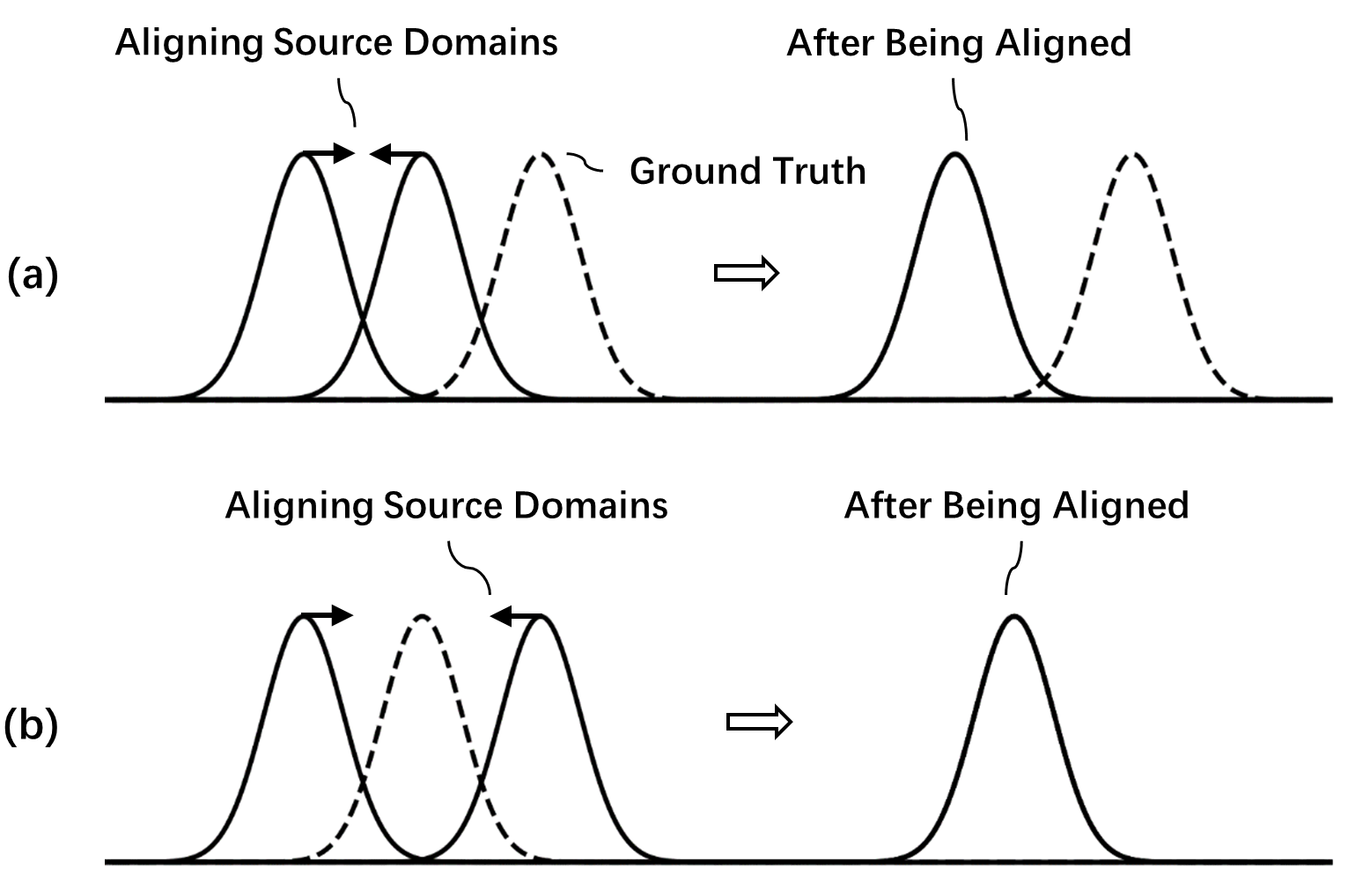}
	\caption{A toy example illustrates the rationality of Assumption \ref{ass:mda}. (a) The ground-truth marginal distribution lies outside of the convex hull of the source domains; (b) The ground-truth marginal distribution lies inside of the convex hull.}
	\label{fig:convex hull}
\end{figure}
As shown in Figure \ref{fig:convex hull}, it is reasonable that the ground-truth marginal distribution should lie inside of the convex hull of source domains for domain generalization. Under this assumption, the ground-truth marginal distribution can be depicted by source domains. Otherwise, the generalization on any possible unseen domain given several source domains can not be guaranteed, and domain generalization would be an unattainable goal. Similar assumptions are also covered in \cite{dro-arxiv-2019, bound-2021-arxiv, iedg-2022-aistats, re-icml-2021}. \cite{dro-arxiv-2019} and \cite{bound-2021-arxiv} assume that the distributions of the unseen domain stay inside the convex hull of source domains. \cite{dro-arxiv-2019} tries to optimize the worst-case expected loss over an uncertainty set of distributions, which encodes the possible test distributions. The uncertainty set is defined as a set of convex combinations of source domains. Even though \cite{iedg-2022-aistats} and \cite{re-icml-2021} try to handle scenarios that  unseen domains are extrapolations of source domains, they still admit that many existing researches are based on the basic assumption that unseen domains can be seen as interpolations of source domain and it is an important scenario for consideration. 

Under the above assumption, we try to align marginal distributions across different source domains so that the convex hull shrinks to a certain point. In this case, the ground-truth marginal distribution would be aligned to domain-specific marginal distributions, and the integral term in Eq.(\ref{fml:objective}) would approach 0. In other words, we hope that $\Delta^i(Z)$ is negligibly small after aligning $\{P^i(Z)\}^{N}_{i=1}$. We can get the following proposition by adding a constraint to Eq.(\ref{fml:objective}):
\begin{proposition}[\textbf{Constrained Maximum Cross-Domain Likelihood}]
	\label{pro:mcdl}
	Under Assumption \ref{ass:mda}, if the marginal distributions of source domains are aligned, the original optimization objective in Eq.(\ref{fml:original-objective}) can be achieved by solving the following constrained optimization problem:
	\begin{equation}\label{final optimization objective}
		\begin{aligned}
			&\mathop{\max}_F\ \sum_{i \neq j} \mathbb{E}_{P^i\left(Z,Y\right)}\left[\log P^j\left(Y|Z\right)\right]\\
			&\quad \ \quad  + \sum_{i=1}^N \mathbb{E}_{P^i\left(Z,Y\right)} \left[\log P^g\left(Y|Z \right) \right] \\
			&s.t. \quad 
			\forall 1\le i \neq j\le N, \quad P^i(Z) = P^j(Z)
		\end{aligned}
	\end{equation}	
\end{proposition}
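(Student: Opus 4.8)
The plan is to take as the starting point the decomposition already recorded in Eq.(\ref{fml:objective}), which is algebraically equal to the original objective Eq.(\ref{fml:original-objective}) (the cancellation of the maximum-in-domain-likelihood term against the negative-entropy part of the KL-divergence in Eq.(\ref{fml:regklobjective}) being exactly the content of the supplementary derivation). Granting that identity, the whole proof reduces to showing that, under the stated hypotheses, the one term in Eq.(\ref{fml:objective}) that couples the objective to the unknown ground-truth marginal --- namely the integral $\int \Delta^i(Z)\,\text{KL}(P^i(Y|Z)\parallel P^j(Y|Z))\,dZ$ --- vanishes identically. Once this term is removed, what survives is precisely the two likelihood sums, and minimizing their negative is the same as the maximization written in Eq.(\ref{final optimization objective}).

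The heart of the argument is the treatment of the convex hull. First I would impose the constraint of Eq.(\ref{final optimization objective}), i.e. $P^i(Z)=P^j(Z)$ for all $i\neq j$, so that all source marginals coincide in a common distribution $P^\star(Z)$. Then every convex combination $\sum_{i=1}^N \pi_i P^i(Z)$ with $\pi_i\ge 0$ and $\sum_i \pi_i=1$ collapses to $P^\star(Z)$, so the convex hull $\Lambda$ of Assumption \ref{ass:mda} degenerates to the singleton $\{P^\star(Z)\}$. Since Assumption \ref{ass:mda} guarantees $P(Z)\in\Lambda$, this forces $P(Z)=P^\star(Z)=P^i(Z)$ for every $i$, whence $\Delta^i(Z)=P(Z)-P^i(Z)\equiv 0$ by Definition \ref{def:ef}.

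With $\Delta^i(Z)\equiv 0$ the integral term is zero for each ordered pair $i\neq j$, and Eq.(\ref{fml:objective}) reduces to $-\sum_{i\neq j}\mathbb{E}_{P^i(Z,Y)}[\log P^j(Y|Z)] - \sum_{i=1}^N \mathbb{E}_{P^i(Z,Y)}[\log P^g(Y|Z)]$. Minimizing this over the feature map $F$ is identical to maximizing its negation, which is exactly the objective of Eq.(\ref{final optimization objective}); re-attaching the marginal-alignment constraint that produced $\Delta^i\equiv 0$ recovers the constrained problem verbatim, so solving it attains the value of the original objective.

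The hard part will be handling the logical status of the alignment condition, since it appears simultaneously as the hypothesis that zeroes out $\Delta^i$ and as the explicit constraint of the target problem; I must therefore present the deduction as ``the constraint implies $\Delta^i\equiv 0$, hence the constrained objective coincides with Eq.(\ref{fml:original-objective}) on the feasible set'' rather than as a circular equivalence. A related subtlety worth flagging is that the collapse $\Delta^i\equiv 0$ is exact only for perfect alignment: in any practical optimization the marginals are matched only approximately, so strictly the integral is merely negligible and the statement should be read as holding in the limit of exact marginal alignment, consistent with the informal claim that the integral ``approaches $0$.'' Finally, I would carefully check the sign bookkeeping when flipping $\min$ to $\max$, since that is the single place where an error would silently invert the claimed objective.
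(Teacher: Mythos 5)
Your proposal is correct and follows essentially the same route as the paper's own proof: the alignment constraint collapses the convex hull to a single point, Assumption \ref{ass:mda} then forces $P(Z)=P^i(Z)$ so that $\Delta^i(Z)\equiv 0$, the integral term in Eq.(\ref{fml:objective}) vanishes, and the remaining likelihood terms give Eq.(\ref{final optimization objective}) after negating the minimization. Your additional remarks on the non-circular reading of the constraint and on approximate alignment are sensible elaborations but do not change the argument.
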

\begin{proof}
	Under Assumption \ref{ass:mda}, if $\forall i \neq j, \ P^i(Z) = P^j(Z)$, we can get that $\forall i, \ P(Z) = P^i(Z)$, and then $\Delta^i(Z)\equiv 0$. Hence $\int \Delta^i(Z) KL\left(P^i\left(Y|Z\right) \parallel P^j\left(Y|Z\right)\right) dZ=0$, and then we can get that Eq.(\ref{final optimization objective}) is equivalent to Eq.(\ref{fml:original-objective}) 
\end{proof}
Optimizing both KL-divergence and maximum in-domain likelihood generates a constrained optimization problem, containing a term of maximum cross-domain likelihood under the condition of marginal distribution alignment, which means that the data sampled from one domain should have high posterior probability even though measured in the posterior distribution of another domain. This optimization objective of maximum cross-domain likelihood realizes the alignment of posterior distributions while improving the discrimination of representation space, and extends the traditional maximum likelihood to the domain shift setting. 
Marginal distributions and posterior distributions in the representation space will be aligned by solving this constrained optimization problem, and thus joint distributions will be aligned naturally.
Furthermore, the marginal distribution alignment is non-trivially coupled with posterior distribution alignment, which is indeed designed for the purposed of enhancing the generalization ability of the domain-invariant classifier.

\begin{figure*}[t]
	\setlength{\abovecaptionskip}{0pt}
	\setlength{\belowcaptionskip}{0pt}
	\renewcommand{\figurename}{Figure}
	\centering
	\includegraphics[width=0.99\textwidth]{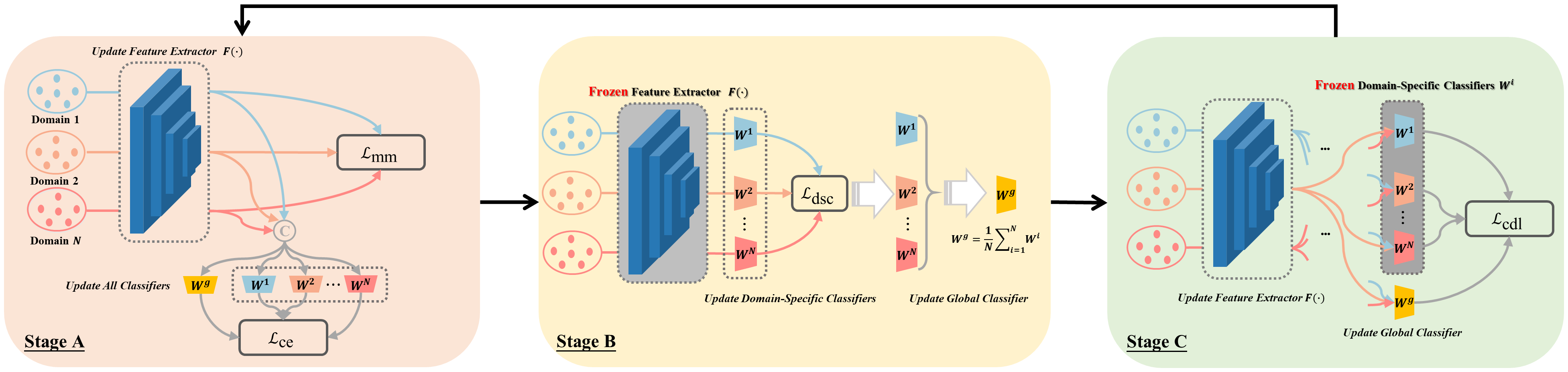}
	\caption{The flowchart of the proposed alternating optimization strategy. Multiple domain-specific classifiers are prepared for posterior distribution alignment, along with a global classifier used at test time. Stage A: Train all classifiers via vanilla empirical risk minimization, and align marginal distributions. Stage B: Estimate the posterior distribution of each domain in a fixed representation space, that is, the feature extractor is frozen. Stage C: Update the feature extractor to align the posterior distributions parameterized by frozen domain-specific classifiers via maximum cross-domain likelihood.}
	\label{fig:CMCL}
\end{figure*}

\subsubsection{The Practical Operation}\label{sec:Practical}
The non-convex constrained optimization problem described in Eq.(\ref{final optimization objective})  is hard to be solved. For simplicity, we transform it into an unconstrained optimization problem by adding a penalization term:
\begin{equation}\label{real final optimization objective}
	\begin{aligned}
		\mathop{\max}_F \sum_{i \neq j} \Big(& \mathbb{E}_{P^i\left(Z,Y\right)}\left[\log P^j\left(Y|Z\right)\right] 
		 - \lambda \text{Dis}\left(P^i(Z), P^j(Z) \right)\Big)\\
		+ \sum_{i=1}^N & \mathbb{E}_{P^i\left(Z,Y\right)} \left[\log P^g\left(Y|Z \right)\right] \\
	\end{aligned}
\end{equation}	
where $\lambda$ is a parameter controlling the intensity of the penalization term, and $\text{Dis}(\cdot)$ denotes the distance between two distributions. We adopt the moment matching loss \cite{{cmd-iclr-2017}} to implement the penalization term $\text{Dis}(\cdot)$. The first-order raw moment and second-order central moment of marginal distributions can be calculated as follows:
\begin{equation}\label{fml:mean}
	\begin{aligned}
		\bar{z}^i = \frac{1}{|\mathcal{S}^i|} \sum_{x \in \mathcal{S}^i} F\left(x\right)
	\end{aligned}
\end{equation}
\begin{equation}\label{fml:cov}
	\begin{aligned}
		C^i = \frac{1}{|\mathcal{S}^i|-1} \sum_{x \in \mathcal{S}^i} \left(F\left(x\right) - \bar{z}^i\right)\left(F\left(x\right) - \bar{z}^i\right)^T
	\end{aligned}
\end{equation}
Moment matching loss functions are designed as:
\begin{equation}\label{fml:loss mean}
	\begin{aligned}
		\mathcal{L}_\text{mean} = \frac{2}{N(N-1)d}\sum_{i \neq j}\left\Vert \bar{z}^i - \bar{z}^j\right\Vert_F^2
	\end{aligned}
\end{equation}
\begin{equation}\label{fml:loss cov}
	\begin{aligned}
		\mathcal{L}_\text{cov} = \frac{2}{N(N-1)d^2}\sum_{i \neq j} \left\Vert C^i - C^j \right \Vert_F^2
	\end{aligned}
\end{equation}
where $d$ denotes the dimension of features used to rescale the loss value, and $\Vert \cdot \Vert_F^2$ denotes the squared matrix Frobenius norm.

Then the final moment matching loss function can be defined as:
\begin{equation}\label{mm}
	\begin{aligned}
		\mathcal{L}_\text{mm} = \lambda_1 \mathcal{L}_\text{mean} + \lambda_2 \mathcal{L}_\text{cov} 
	\end{aligned}
\end{equation}
where $\lambda_1$ and $\lambda_2$ are trade-off parameters.

\subsection{Alternating Optimization Strategy}\label{sec:aos}
In this subsection, we propose an alternating optimization strategy to approximately solve Eq.(\ref{real final optimization objective}). In this elaborately designed optimization process, the posterior distribution estimation and the posterior distribution alignment are decoupled and carried out at different stages, and the difference among domains is explicitly explored and then  minimized effectively.

\subsubsection{Parameterization of Posterior Distribution}
It is primary to calculate the posterior probability given a sample $P(Y|Z=z)$ when optimizing the objective of maximum cross-domain likelihood. We adopt the softmax classifier to parameterize the posterior distribution.

\begin{equation}\label{fml:softmax}
	\begin{aligned}
		P\left(Y=y | Z=z\right) = \frac{\exp\left(w_yz\right)}{\sum_{y\prime\in \mathcal{Y}} \exp{\left(w_{{y}\prime}z\right)}}
	\end{aligned}
\end{equation}
where $w_y$ and $w_{{y}\prime}$ denote the corresponding row of the parameter matrix $W \in \mathbb{R}^{K\times d}$ of the softmax classifier and $K$ is the number of classes. In the process of optimizing the maximum cross-domain likelihood objective described in Eq.(\ref{final optimization objective}) and Eq.(\ref{real final optimization objective}), posterior distributions of all domains need to be estimated separately. Hence $N$ domain-specific classifiers $\{W^i\}_{i=1}^N$ are introduced to parameterize the posterior distribution of each domain. 
In addition to domain-specific classifiers, we need to train a global classifier $W^g$ with all samples based on the learned representation, which is required by the second term in Eq.(\ref{real final optimization objective}).
\subsubsection{Alternating Optimization}\label{sec:AO}
In order to maximize cross-domain likelihood, we should estimate the posterior distributions of all domains $\{P^i(Y|Z; W^i)\}_{i=1}^{N}$ before updating the feature extractor $F(\cdot)$. After $F(\cdot)$ is updated, the representation space has been changed and $\{P^i(Y|Z; W^i)\}_{i=1}^{N}$ need to be re-estimated. Therefore, an alternating optimization strategy for domain-invariant classifier learning is designed to approximately solve the constrained optimization problem:

\textbf{Stage A:} The feature extractor and all classifiers are jointly trained through vanilla empirical risk minimization to maintain the classification ability of the classifiers and further enhance the discrimination ability of the learned representation extracted by $F(\cdot)$ during the alternating training process. The loss function can be calculated as:
\begin{equation}\label{lce}
	\begin{aligned}
		\mathcal{L}_\text{ce} = - \frac{1}{\left|\mathcal{D}\right|} \sum_{(x, y) \in \mathcal{D}} \Big(\sum_{i=1}^N \log P^i\left(y|F\left(x\right);W^i\right) \\
		+ \log P^g\left(y|F\left(x\right); W^g\right) \Big)
	\end{aligned}
\end{equation}
which is essentially a cross-entropy loss function. Additionally, at this stage, the penalization term in Eq.(\ref{real final optimization objective}) is implemented by aligning marginal distributions by moment matching. The loss function at this training stage can be defined as:
\begin{equation}\label{lcemm}
	\begin{aligned}
		\mathcal{L}_\text{cemm} = \mathcal{L}_\text{ce} + \mathcal{L}_\text{mm}
	\end{aligned}
\end{equation}


\textbf{Stage B:} The feature extractor is frozen, providing a deterministic representation space for estimating the posterior distributions, which is denoted by $\overline{F}(\cdot)$. Given the fixed representations, the domain-specific classifiers are trained with data sampled from respective domains. The loss function at this training stage can be defined as:
\begin{equation}\label{dsc}
	\begin{aligned}
		\mathcal{L}_\text{dsc} = - \sum_{i=1}^N \frac{1}{\left|\mathcal{S}^i\right|} \sum_{(x, y) \in \mathcal{S}^i} \log P^i\left(y|\overline{F}\left(x\right);W^i\right)
	\end{aligned}
\end{equation}
As mentioned earlier, the domain-specific classifiers tend to be consistent as the alternating training goes on. Then the optimal global classifier can be obtained at the convergence point of the domain-specific classifiers. Hence, at this stage, we set the parameters of the global classifier as the mean of all domain-specific classifiers to accelerate the convergence of the training process and improve the stability of the training process:
\begin{equation}\label{cls_mean}
	\begin{aligned}
		W^g = \frac{1}{N}\sum_{i=1}^N W^i
	\end{aligned}
\end{equation}

\textbf{Stage C:} The domain-specific classifiers is frozen, providing measurements of the posterior distributions for updating the feature extractor. Given the fixed domain-specific classifiers $\{\overline{W}^i\}_{i=1}^N$, the data sampled from one domain are fed into the classifier of another domain. Then the cross-domain likelihood is maximized by updating the feature extractor. The loss function at this training stage can be defined as:
\begin{equation}\label{cdl}
	\begin{aligned}
		\mathcal{L}_\text{cdl} = - \sum_{i=1}^N \frac{1}{\left|\mathcal{S}^i\right|} \sum_{(x, y) \in \mathcal{S}^i} \Big(&\sum_{j\neq i}\log P^j(y|F(x);\overline{W}^j)\\
		&	+ \log P^g(y|F(x);{W}^g) \Big)
	\end{aligned}
\end{equation}
At this stage, the initial point of parameters of the global classifier $W^g$ is the average of all domain-specific classifiers as mentioned at stage B. The global classifier is trained together with the feature extractor alleviating the problem of over-adjustment when maximizing cross-domain likelihood.

\begin{algorithm}[t]
	\SetAlgoLined
	\caption{\textbf{Alternating optimization strategy for domain-invariant classifier learning}}
	\label{algo:CMCL}
	\KwIn{Source domains: $\{\mathcal{S}^i = \{(x_j^i, y_j^i)\}_{j=1}^{N_i}\}_{i=1}^N$. \\
		\quad \quad \quad Online model: $F_o(\cdot)$, $\{W_o^i\}_{i=1}^N$ and $W_o^g$.\\
		\quad \quad \quad Target model: $F_t(\cdot)$ and $W_t^g$. \\
		\quad \quad \quad Paramenter: $\alpha$, $\lambda_1$, $\lambda_2$, $n$, $n_A$, $n_B$ and $n_C$. }
	\KwOut{Target model: $F_t(\cdot)$ and $W_t^g$}
	\BlankLine
	\For{$ite = 1 : n$}
	{
		// \textit{Stage A}\\
		\For{$ite_A = 1 : n_A$}
		{
			Sample a mini-batch data from each domain;\\
			Calculate $\mathcal{L}_\text{ce}$ according to (\ref{lcemm});\\
			Calculate $\mathcal{L}_\text{mm}$ according to (\ref{mm});\\
			Update $F_o(\cdot)$, $\{W_o^i\}_{i=1}^N$ and $W_o^g$ by gradient descent;\\
			Update $F_t(\cdot)$ and $W_t^g$  according to (\ref{mu});\\
		}
		// \textit{Stage B}\\
		\For{$ite_B = 1 : n_B$}
		{
			Freeze the parameters of feature extractor $F_o(\cdot)$;\\
			Sample a mini-batch data from each domain;\\
			Calculate $\mathcal{L}_\text{dsc}$ according to (\ref{dsc});\\
			Update $\{W_o^i\}_{i=1}^N$ by gradient descent;\\
			Update $W_o^g$ according to (\ref{cls_mean})
		}
		// \textit{Stage C}\\
		\For{$ite_C = 1 : n_C$}
		{
			Freeze the parameters of domain-specific classifiers $\{W_o^i\}_{i=1}^N$;\\
			Sample a mini-batch data from each domain;\\
			Calculate $\mathcal{L}_\text{cdl}$ according to (\ref{cdl});\\
			Update $F_o(\cdot)$ and $W_o^g$ by gradient descent;\\
			Update $F_t(\cdot)$ and $W_t^g$ according to  (\ref{mu});\\
		}				
	}
\end{algorithm}

As described above, we carry out three stages of the training process alternately and this process keeps cycling.
To improve the stability of the training process and facilitate generalization, in addition to the online model which is updated along the negative gradient direction, we introduce an extra target model which is updated along the differential direction of the parameters between the online model and target model. It is essentially the Exponential Moving Average (EMA) of parameters of the online model:
\begin{equation}\label{mu}
	\Theta_{t}^{target} = \Theta_{t-1}^{target} + \alpha \left(\Theta_{t}^{online}-\Theta_{t-1}^{target}\right)
\end{equation}
where $\Theta = \{W^g, F\}$, $\Theta_{t}^{target}$ and $\Theta_{t}^{online}$ denote the parameters of target model and online model at step $t$ respectively, and $\alpha$ denotes the step size of EMA. In this paper, $\alpha$ is set to 0.001 for all experiments.

As Figure \ref{fig:CMCL} shows, we optimize $\mathcal{L}_\text{cemm}$, $\mathcal{L}_\text{dsc}$ and $\mathcal{L}_\text{cdl}$ alternately to align marginal distributions and posterior distributions so that the constrained optimization problem described in Eq.(\ref{final optimization objective}) can be solved approximately. In order to illustrate the training process clearly, the pseudo-code of our algorithm is provided in Algorithm \ref{algo:CMCL}.

\section{Experimental Results and Analysis} \label{section:experiment}
In this section, we conduct extensive experiments on four popular domain generalization datasets to validate the effectiveness of the proposed CMCL. 
Ablation studies and further analysis are carried out to analyze the characteristics of the proposed CMCL approach.
\subsection{Datasets and Settings}
\begin{itemize}
    \item {\it Digits-DG} \cite{digits-dg-AAAI-2020} is a digit recognition benchmark, which is consisted of four classical datasets {\it MNIST} \cite{mnist-1998}, {\it MNIST-M} \cite{mnist-m-2015}, {\it SVHN} \cite{svhn-2011} and {\it SYN} \cite{mnist-m-2015}. Each dataset is treated as a domain, and the domain shift across the four datasets is mainly reflected in font style, background and image quality. Each domain contains seven categories, and each class contains 600 images. The original train-validation split in \cite{digits-dg-AAAI-2020} is adopted for fair comparison.
    \item {\it PACS} \cite{pacs-iccv-2017} is an object recognition benchmark which is consisted of four domains namely {\it Photo}, {\it Art}, {\it Cartoon} and {\it Sketch}. The main domain shift is reflected in large discrepancy in image style. There are 9991 images and 7 classes in {\it PACS}. The original train-validation split provided by \cite{pacs-iccv-2017} is adopted.
    \begin{figure}[!htbp]
	\setlength{\abovecaptionskip}{0pt}
	\setlength{\belowcaptionskip}{0pt}
	\renewcommand{\figurename}{Figure}
	\centering
	\includegraphics[width=0.49\textwidth]{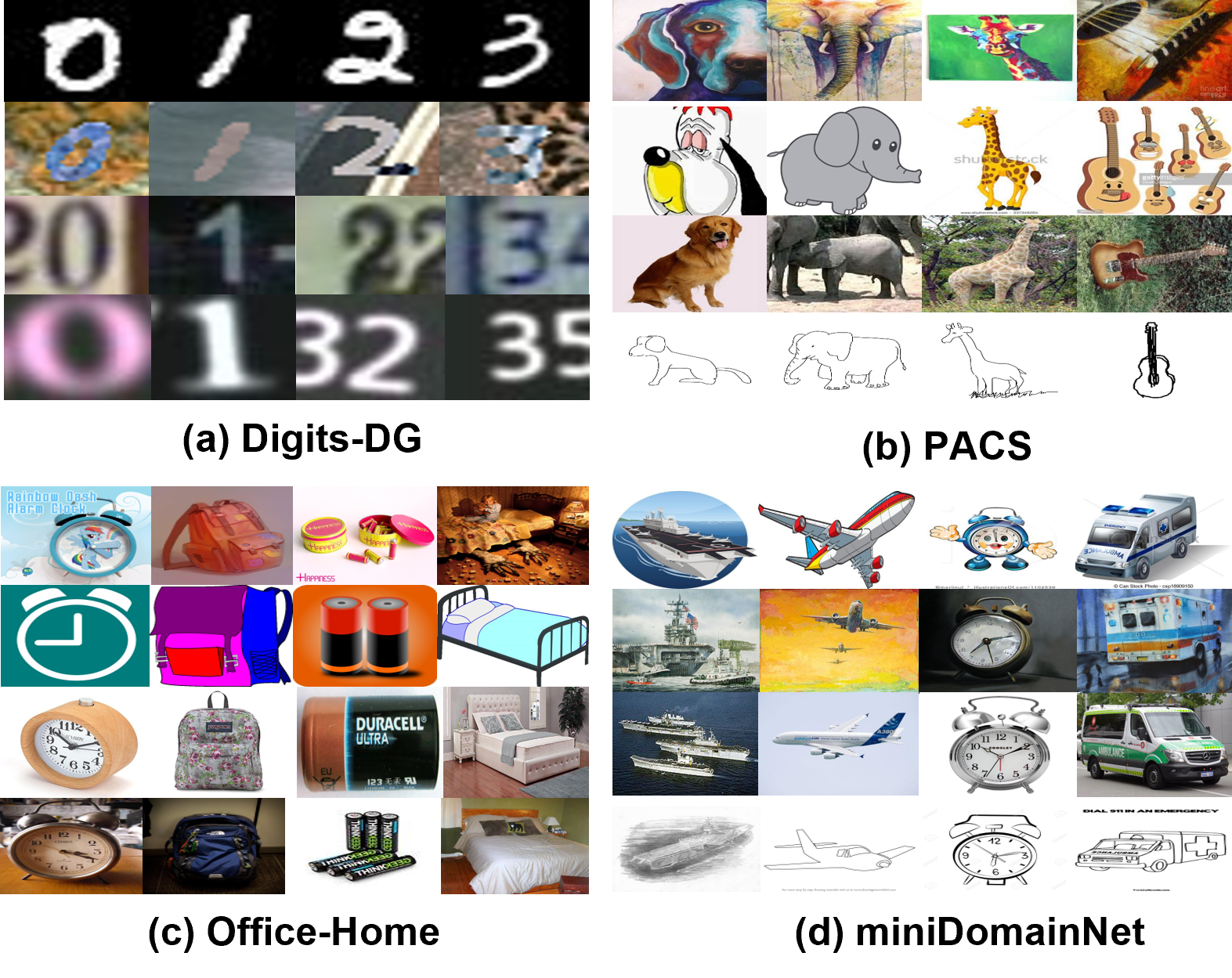}
	\caption{{Samples from PACS, Office-Home, miniDomainNet and Digits-DG datasets.}}
	\label{fig:datasets}
\end{figure}
    \item {\it Office-Home} \cite{office-home-cvpr-2017} is also an object recognition benchmark which consists of four domains, i.e., {\it Art}, {\it Clipart}, {\it Product} and {\it Real-world}. The main domain shift is reflected in image style and viewpoint, with less diversity than PACS. There are around 15,500 images in total, and each domain contains 65 classes. We randomly split each domain into 90\% for training and 10\% for validation following \cite{jgen-cvpr-2019}.
    \item {\it miniDomainNet} \cite{minidomainnet-tip-2021} is a larger object recognition benchmark, which takes a subset of {\it DomainNet} \cite{coral-iccv-2019} containing four domains namely {\it Clipart}, {\it Painting}, {\it Real} and {\it Sketch}. There are 140,006 images and 126 classes. The original train-validation split provided by \cite{minidomainnet-tip-2021} is adopted.
\end{itemize}

Images are resized to 32 $\times$ 32 for Digits-DG, and 224 $\times$ 224 for PACS, Office-Home and miniDomainNet. All experiments are conducted following the commonly used leave-one-domain-out protocol \cite{pacs-iccv-2017}. Specifically, one domain is specified as an unseen test domain, and the remaining domains are treated as source domains to train the model. To fairly compare with published methods, our models are trained using data only from the training split, and the models with the best result on the validation split are selected for testing. All results are reported based on the average top-1 classification accuracy over three repetitive runs.

\subsection{Implementation Details}

For all experiments, the models are constructed by a feature extractor, three domain-specific classifiers and a global classifier. The classifiers are implemented by softmax layers. The domain-specific classifiers are trained by an AdamW optimizer with a learning rate of 1e-5 and weight decay of 5e-4. The number of iterations of stage A, {\it i.e.}, $n_A$, is set to 1.

For Digits-DG, the feature extractor is constructed by four 3 $\times$ 3 conv layers (64 kernels), each followed by ReLU and 2 $\times$ 2 max-pooling, following \cite{digits-dg-AAAI-2020}. The feature extractor and global classifier are trained by SGD optimizer with learning rate of 0.05, batch size of 64 for each domain, momentum of 0.9 and weight decay of 5e-4. The number of outer loops and iteration of each stage, {\it i.e.}, $n$,  $n_B$ and $n_C$, are set to 4,000, 8, 6 respectively. In particular, $\lambda_1$ and $\lambda_2$ are set to 0.001, 0.01 for Digits-DG.
For PACS, Office-Home and miniDomainNet, ResNet-18 pre-trained on ImageNet is used as a feature extractor removing the last layer, and $\lambda_1$ and $\lambda_2$ are set to 10 and 100, respectively. Experiments using pre-trained ResNet-50 as feature extractor are also conducted on PACS and Office-Home. The feature extractor and global classifier are trained by AdamW optimizer with an initial learning rate of 1e-4, batch size of 64 for each domain and weight decay of 5e-4. 
We set $n=800$, $n_B=8$ and $n_C=6$ for PACS,  $n=1,200$, $n_B=6$ and $n_C=2$ for Office-Home, $n=2,000$, $n_B=6$ and $n_C=2$ for miniDomainNet.
Standard data augmentation including crops of random size, random horizontal flips, random color jitter and randomly converting the image to grayscale are used, following \cite{domainbed-iclr-2021}, \cite{stablnet-cvpr-2021}.

\subsection{Performance Comparison}
In this subsection, we compare our method with a series of domain generalization methods, presenting reported accuracy on each dataset. 
In existing domain generalization datasets, the domain shift is mainly reflected by the image style as shown in Figure \ref{fig:datasets}. Hence some works develop their methods based on the prior knowledge about domain shift, {\it e.g.}, the style randomization. For a fair comparison, methods are divided into two groups according to whether the prior knowledge of domain shift is used. Our method deals with domain generalization without the need of prior knowledge and can be applied to various datasets with more general domain shifts.

\begin{table}[t]
	\renewcommand{\arraystretch}{1.2}
	\centering
	{
		\caption{Performance comparison  on Digits-DG dataset.}
		\label{table:Digits-DG}
		\scalebox{1}{
			\begin{threeparttable}
				\begin{tabular}{l c c c c c }
					\toprule[1.5pt]
					\multicolumn{1}{l}{Dataset} & \multicolumn{5}{c}{Digits-DG} \\
					\midrule
					Domain  &   MNIST  &  MNIST-M   &  SVHN    &  SYN   &  Avg.   \\
					\midrule
					\multicolumn{6}{c} {\it{w/ Prior Knowledge about Domain Shift}} \\
					\midrule
					
					DDAIG \cite{digits-dg-AAAI-2020}	 & $96.60$   & $\underline{64.10}$          & $68.60$          & $81.00$          & $77.60$         \\
					
					MGFA \cite{mgfa-bmvc-2021}	  & $95.71$         & $60.66$          & $69.35$          & $74.38$          & $75.02$          \\
					MixStyle  \cite{mixstyle-iclr-2021}    & $96.50$     & $63.50$          & $64.70$    & $81.20$     & $76.50$      \\
					\midrule
					\multicolumn{6}{c} {\it{w/o Prior Knowledge about Domain Shift}} \\
					\midrule
					DeepAll \cite{digits-dg-AAAI-2020}   & $95.80$  & $58.80$ & $61.70$  & $78.60$ & $73.70$  \\
					CCSA \cite{ccsa-iccv-2017} & $95.20$   & $58.20$          & $65.50$          & $79.10$          & $74.50$ \\
					MMD-AAE	\cite{mmd-aae-cvpr-2018}	 & $96.50$   & $58.40$     & $65.00$          & $78.40$          & $74.60$ \\
					CrossGrad \cite{crossgrad-iclr-2018}	 & $\underline{96.70}$   & $61.10$     & $65.30$          & $80.20$          & $75.80$ \\
					L2A-OT \cite{l2a-ot-eccv-2020}  \ \    & $\underline{96.70}$      & ${63.90}$    & $68.60$          & $83.20$          & $78.10$         \\
					SFA-A  \cite{sfa-a-iccv-2021}    & $96.50$         & $\textbf{66.50}$          & $\underline{70.30}$          & $\underline{85.00}$          & $\underline{79.60}$    \\
					
					\midrule
					CMCL  & $\textbf{98.37}$ & $63.85$ & $\textbf{77.75}$ & $\textbf{94.03}$ & $\textbf{83.50}$   \\
					\bottomrule[1.5pt]
				\end{tabular}
				\footnotesize
				The best result is in bold face. Underlined ones represent the second-best results. 
	\end{threeparttable}}}
\end{table}

\subsubsection{Evaluation on Digits-DG}
The domain generalization results on the Digits-DG benchmark are reported in Table \ref{table:Digits-DG}. For all compared approaches, we summarize the results reported in their original papers.
We observe that the proposed CMCL achieves the best performance in average accuracy and significantly surpasses the second-best competitor by a large margin of 3.90\%. Specifically, CMCL outperforms competitors on MNIST, SVHN and SYN. The improvement on MNIST-M is not as significant as those on other domains, mainly due to its dissimilarity with source domains as shown at the second row in Figure \ref{fig:datasets}. On the contrary, image augmentation based methods, DDAIG, L2A-OT, and feature augmentation based method, SFA-A, obtain larger improvement on MNIST-M but perform worse than CMCL on other ones. Probably because the domain shift of MNIST, SVHN and SYN are independent of image style and texture, the proposed CMCL which does not rely on any prior knowledge of domain shift works better on these domains. Note that the proposed CMCL has great advantages over CCSA and MME-AAE, which are all domain-invariant representation based methods.

\begin{table}[t]
	\renewcommand{\arraystretch}{1.2}
	\centering
	{
		\caption{Performance comparison  on PACS dataset.}
		\label{table:PACS18}
		\scalebox{1}{
			\begin{threeparttable}
				\begin{tabular}{ l c c c c c  }
					\toprule[1.5pt]
					\multicolumn{1}{ l }{Dataset} & \multicolumn{5}{c }{PACS} \\
					\midrule
					\multicolumn{1}{ l }{Backbone} & \multicolumn{5}{c }{ResNet-18} \\
					\midrule
					Domain  &   Art  & Cartoon   &   Photo   &  Sketch    &   Avg.   \\
					\midrule
					\multicolumn{6}{c} {\it{w/ Prior Knowledge about Domain Shift}} \\
					\midrule
					
					DDAIG \cite{digits-dg-AAAI-2020}	 & ${84.20}$         & $78.10$      & $95.30$      & $74.70$          & $83.10$    \\
					DSON \cite{dson-eccv-2020}   & $\textbf{84.67}$         & $77.65$          & $95.87$          & $\underline{82.23}$          & $85.11$          \\
					MGFA \cite{mgfa-bmvc-2021}	  & $81.70$         & $77.61$          & $95.40$          & $76.02$          & $82.68$          \\
					MixStyle  \cite{mixstyle-iclr-2021}    & $84.10$         & $78.80$    & $96.10$     & $75.90$     & $83.70$           \\
					LDSDG   \cite{ldsdg-cvpr-2021}  & $81.44$    & $79.56$      & $95.51$     & $80.58$       & $84.27$          \\
					SagNet	\cite{sagnet-cvpr-2021} & $83.58$         & $77.66$      & $95.47$      & $76.30$          & $83.25$    \\
					pAdaIN \cite{padin-cvpr-2021}  & $81.74$         & $76.91$          & $96.29$          & $75.13$          & $82.51$    \\		
					EFDMix \cite{efd-cvpr-2022}    & $83.90$         & $79.40$          & $\textbf{96.80}$          & $75.00$          & $83.90$    \\		
					
					\midrule
					\multicolumn{6}{c} {\it{w/o Prior Knowledge about Domain Shift}} \\
					\midrule
					DeepAll \cite{dger-nips-2020}  & $78.93$  & $75.02$ & $96.60$  & $70.48$ & $80.25$  \\
					Metareg  \cite{metareg-nips-2018}	 & $83.70$         & $77.20$          & $95.50$          & $70.30$          & $81.70$  \\
					MASF \cite{masf-nips-2019}  & $80.29$          & $77.17$          & $94.99$          & $71.69$          & $81.04$         \\
					DMG \cite{dmg-eccv-2020}  & $76.90$          & $\underline{80.38}$          & $93.35$          & $75.21$          & $81.46$         \\
					EISNet  \cite{eisnet-eccv-2020} & $81.89$         & $76.44$          & $95.93$          & $74.33$          & $82.15$  \\
					DGER \cite{dger-nips-2020}  \ \    & $80.70$         & $76.40$          & ${96.65}$          & $71.77$     & $81.38$     \\
					MMLD  \cite{mmld-aaai-2020}   & $81.28$          & $77.16$          & $96.09$          & $72.29$          & $81.83$          \\
					L2A-OT  \cite{l2a-ot-eccv-2020}     & $83.30$         & $78.20$          & $96.20$          & $73.60$          & $82.80$   \\
					
					RSC \cite{rsc-eccv-2020}			& $83.43$         & $80.31$          & $95.99$          & $80.85$          & $85.15$    \\
					
					RSC* \cite{rsc-eccv-2020}			& $78.90$         & $76.88$          & $94.10$          & $76.81$          & $81.67$    \\
					
					
					NAS-OoD	\cite{nas-ood-iccv-2021}	 & $83.74$    & $79.69$      & $96.23$     & $77.27$       & $84.23$          \\
					SFA-A  \cite{sfa-a-iccv-2021}   & $81.20$         & $77.80$          & $93.90$          & $73.70$          & $81.70$    \\
					
					DAML	\cite{daml-cvpr-2021}	 & $83.00$         & $74.10$          & $95.60$          & $78.10$          & $82.70$    \\
					StableNet	\cite{stablnet-cvpr-2021}	 & $81.74$         & $79.91$      & $96.53$      & $80.50$          & $84.69$    \\
					
					MDGHybrid \cite{mdghybrid-icml-2021} & $81.71$         & $\textbf{81.61}$      & $\underline{96.67}$      & $81.05$          & $\underline{85.53}$    \\
					DSFG  \cite{dsfg-wacv-2022} & $83.89$         & $76.45$      & $95.09$      & $78.26$          & $83.42$    \\
					
					\midrule
					
					CMCL & $\underline{84.55}$   & $80.08$ & $94.95$ & $\textbf{82.86}$ & $\textbf{85.61}$ \\
					\bottomrule[1.5pt]
				\end{tabular}
				\footnotesize
				The best result is in bold face. Underlined ones represent the second-best results. RSC* denotes the reproduced results from pAdaIN \cite{padin-cvpr-2021}.
	\end{threeparttable}}}
\end{table}

\begin{table}[t]
	\renewcommand{\arraystretch}{1.2}
	\centering
	{
		\caption{Performance comparison  on PACS dataset.}
		\label{table:PACS50}
		\scalebox{1}{
			\begin{threeparttable}
				\begin{tabular}{l c c c c c }
					\toprule[1.5pt]
					\multicolumn{1}{l}{Dataset} & \multicolumn{5}{c }{PACS} \\
					\midrule
					\multicolumn{1}{l}{Backbone} & \multicolumn{5}{c }{ResNet-50} \\
					\midrule
					Domain  &   Art  & Cartoon   &   Photo   &  Sketch    &   Avg.   \\
					
					\midrule
					\multicolumn{6}{c} {\it{w/ Prior Knowledge about Domain Shift}} \\
					\midrule
					
					DSON \cite{dson-eccv-2020}    & $87.04$         & $80.62$          & $95.99$          & $82.90$          & $86.64$   \\
					MGFA \cite{mgfa-bmvc-2021}	  & $86.40$         & $79.45$          & $97.86$          & $78.72$          & $85.60$          \\
					pAdaIN \cite{padin-cvpr-2021}  & $85.82$         & $81.06$          & $97.17$          & $77.37$          & $85.36$    \\
					EFDMix \cite{efd-cvpr-2022}    & $\textbf{90.60}$         & $\underline{82.50}$          & $98.10$          & $76.40$          & $86.90$    \\
					
					\midrule
					\multicolumn{6}{c} {\it{w/o Prior Knowledge about Domain Shift}} \\
					\midrule
					DeepAll \cite{dger-nips-2020}  & $86.18$  & $76.79$ & $98.14$  & $74.66$ & $83.94$  \\	
					Metareg  \cite{metareg-nips-2018}	 & $87.20$         & $79.20$          & $97.60$          & $70.30$          & $83.60$  \\
					IRM  \cite{irm-2019-arxiv}                & $84.80$     & $76.40$     &$96.70$    & $76.10$       &$83.50$                 \\
					MASF \cite{masf-nips-2019} 	 & $82.89$         & $80.49$          & $95.01$          & $72.29$          & $82.67$  \\
					DMG \cite{dmg-eccv-2020}  & $82.57$          & $78.11$          & $94.49$          & $78.32$          & $83.37$		 \\
					EISNet \cite{eisnet-eccv-2020} & $86.64$         & $81.53$          & $97.11$          & $78.07$          & $85.84$  \\
					DGER \cite{dger-nips-2020}      & ${87.51}$         & $79.31$          & $\underline{98.25}$          & $76.30$          & $85.34$         \\
					RSC 	\cite{rsc-eccv-2020}		& $\underline{87.89}$         & $82.16$          & $97.92$          & $83.35$          & $\textbf{87.83}$    \\
					RSC* \cite{rsc-eccv-2020}			& $81.38$         & $80.14$          & $93.72$          & $82.31$          & $84.38$    \\
					MDGHybrid \cite{mdghybrid-icml-2021}  & $86.74$         & ${82.32}$      & $\textbf{98.36}$      & $82.66$          & $87.52$    \\
					DSFG \cite{dsfg-wacv-2022} & $87.30$         & $80.93$      & $96.59$      & $\underline{83.43}$          & $87.06$    \\
					\midrule
					
					CMCL  & ${87.57}$   & $\textbf{83.60}$ & $96.03$ & $\textbf{83.73}$ & $\underline{87.73}$ \\
					\bottomrule[1.5pt]
				\end{tabular}
				\footnotesize
				The best result is in bold face. Underlined ones represent the second-best results. RSC* denotes the reproduced results from pAdaIN \cite{padin-cvpr-2021}.
	\end{threeparttable}}}
\end{table}

\subsubsection{Evaluation on PACS}
Results on PACS with ResNet-18 and ResNet-50 are presented in Table \ref{table:PACS18} and Table \ref{table:PACS50}, respectively. For all competitors, we summarize the results reported in their original papers. 
We can observe that CMCL outperforms all comparison approaches on average accuracy with ResNet-18 as the feature extractor and obtains comparable performance with the reported best approach with ResNet-50 as feature extractor. The experiments on feature extractors of different sizes further prove the effectiveness of our CMCL. Specifically, CMCL achieves the best accuracy on Sketch and the second best accuracy on Art in Table \ref{table:PACS18} and obtains the best performance on Sketch and Cartoon in Table \ref{table:PACS50}.
We notice that there is a performance drop on Photo compared to the vanilla empirical risk minimization method DeepAll. It is probably because of the ImageNet pretraining. As explained in \cite{randconv-iclr-2020}, models pre-trained on ImageNet may be biased towards texture, and finetuning those models on PACS using empirical risk minimization may inherit this bias, thus leading to better performance on Photo which is similar to ImageNet.

\subsubsection{Evaluation on Office-Home}
Experiments results with ResNet-18 as the feature extractor are reported in Table \ref{table:Office-Home}. For all comparison methods, we summarize the results reported in their original papers. We also report the results with ResNet-50 in Table \ref{table:Office-Home50}. For all comparison methods in Table \ref{table:Office-Home50}, we summarize the results reported in a published work \cite{domainbed-iclr-2021}, which uses the same experiment settings as ours, including data augmentation, model selection and data splitting.
From Table \ref{table:Office-Home}, we can see that our method again achieves the best average accuracy compared to the competitors, though the domain discrepancy of Office-Home is less than other datasets, which is unfavorable for CMCL to eliminate the spurious correlation in datasets and enhance generalization. Due to the similarity to ImageNet, DeepAll, which combines all data from source domains to train a model, acts as a strong baseline and beats a series of DG methods, {\it e.g.}, DSON, MetaNorm, SagNet, MMD-AAE, CrossGrad and RSC. Nevertheless, our method still lifts the performance of DeepAll by a margin of 4.12\% on Art, and  0.99\% on average accuracy. Besides, comparable results with other competitors are also obtained on other domains.
From Table \ref{table:Office-Home50}, we can observe that CMCL exceeds all comparison methods and achieves the best results on all domains. When using a larger backbone, which replaces ResNet-18 with ResNet-50, the performance of CMCL gets significantly improved with a large margin of 5.15\%, demonstrating that our method has a non-trivial improvement in the generalization on unseen domains.

\begin{table}[t]
	\renewcommand{\arraystretch}{1.2}
	\centering
	{
		\caption{Performance comparison  on Office-Home dataset.}
		\label{table:Office-Home}
		\scalebox{0.95}{
			\begin{threeparttable}
				\begin{tabular}{l c c c c c}
					\toprule[1.5pt]
					\multicolumn{1}{l}{Dataset} & \multicolumn{5}{c}{Office-Home} \\
					\midrule
					\multicolumn{1}{l}{Backbone} & \multicolumn{5}{c}{ResNet-18} \\
					\midrule	
					
					Domain  &   Artistic  & Clipart   &  Product   &   Real World    &  Avg.  \\
					
					\midrule
					\multicolumn{6}{c} {\it{w/ Prior Knowledge about Domain Shift}} \\
					\midrule

					DDAIG \cite{digits-dg-AAAI-2020}	 & $59.20$         & $52.30$          & $\underline{74.60}$          & ${76.00}$          & $65.50$  \\
					
					DSON \cite{dson-eccv-2020}     & $59.37$         & $45.70$          & $71.84$          & $74.68$          & $62.90$          \\
					MixStyle  \cite{mixstyle-iclr-2021}    & $58.70$    & $\underline{53.40}$          & $74.20$          & $75.90$     & $65.50$           \\
					MetaNorm	\cite{metanorm-iclr-2021}	 & $59.77$         & $45.98$    & $73.13$     & $75.29$     & $63.55$           \\
					
					SagNet	\cite{sagnet-cvpr-2021} & $\underline{60.20}$         & $45.38$      & $70.42$      & $73.38$          & $62.34$    \\
					
					\midrule
					\multicolumn{6}{c} {\it{w/o Prior Knowledge about Domain Shift}} \\
					\midrule
					DeepAll \cite{digits-dg-AAAI-2020}   & $58.90$         & $49.40$          & $74.30$          & $\underline{76.20}$          & $64.70$         \\
					MMD-AAE	\cite{mmd-aae-cvpr-2018}	 & $56.50$   & $47.30$     & $72.10$          & $74.80$          & $62.70$ \\
					CrossGrad \cite{crossgrad-iclr-2018}	 & $58.40$   & $49.40$     & $73.90$          & $75.80$          & $64.40$ \\
					L2A-OT \cite{l2a-ot-eccv-2020}  \ \    & $\textbf{60.60}$   & $50.10$    & $\textbf{74.80}$    & $\textbf{77.00}$     & $\underline{65.60}$   \\
					RSC \cite{rsc-eccv-2020}			& $58.42$         & $47.90$          & $71.63$          & $74.54$          & $63.12$    \\

					\hline	
					CMCL  & $60.14$   & $\textbf{53.52}$ & $73.57$ & $75.53$ & $\textbf{65.69}$ \\
					\bottomrule[1.5pt]
				\end{tabular}
				\footnotesize
				The best result is in bold face. Underlined ones represent the second-best results. 
	\end{threeparttable}}}
\end{table}

\begin{table}[t]
	\renewcommand{\arraystretch}{1.2}
	\centering
	{
		\caption{Performance comparison  on Office-Home dataset.}
		\label{table:Office-Home50}
		\scalebox{0.95}{
			\begin{threeparttable}
				\begin{tabular}{l c c c c c}
					\toprule[1.5pt]
					\multicolumn{1}{l}{Dataset} & \multicolumn{5}{c}{Office-Home} \\
					\midrule
					\multicolumn{1}{l}{Backbone} & \multicolumn{5}{c}{ResNet-50} \\
					\midrule	
					Domain  &   Artistic  & Clipart   &  Product   &   Real World    &  Avg.  \\
					
					\midrule
					\multicolumn{6}{c} {\it{w/ Prior Knowledge of Domain Shift}} \\
					\midrule
					SagNet	\cite{sagnet-cvpr-2021}  & $63.40$   & $\underline{54.80}$  & $75.80$   & $78.30$  & $68.10  $      \\
					\midrule
					\multicolumn{6}{c} {\it{w/o Prior Knowledge of Domain Shift}} \\
					\midrule
					ERM \cite{domainbed-iclr-2021}  &$ 61.30$       & $52.40 $      & $75.80$     &$ 76.60   $ & $66.50 $                \\
					IRM  \cite{irm-2019-arxiv}                & $58.90$     & $52.20 $     &$ 72.10$    & $74.00$       &$ 64.30$                 \\
					DRO \cite{dro-arxiv-2019}   &$ 60.40  $     & $52.70$     &$75.00 $      &$ 76.00 $      & $66.00$         \\
					Mixup \cite{mixup-iclr-2018}  & $62.40 $    & $\underline{54.80} $     &$ \underline{76.90} $      & $78.30$        & $68.10 $                \\
					MLDG  \cite{mldg-aaai-2018} &$ 61.50 $      & $53.20 $      & $75.00 $      & $77.50 $      & $66.80  $               \\
					CORAL \cite{coral-iccv-2019}   &$ \underline{65.30}$     & $54.40  $     &$ 76.50$      & $\underline{78.40}$      & $\underline{68.70}  $               \\
					MMD \cite{mmd-icml-2015}   & $60.40$        &$ 53.30 $      & $74.30$      & $77.40$        & $66.30 $                \\
					DANN \cite{dann-icml-2015} & $59.90$      & $53.00$   &$73.60 $      & $76.90$      & $65.90$                \\
					CDANN \cite{cdann-eccv-2018}  &$ 61.50$     &$ 50.40$     & $74.40$       &$ 76.60$       & $65.80$               \\
					MTL \cite{mtl-jmlr-2021} &$ 61.50$      &$ 52.40$    & $74.90$      & $76.80$       & $66.40$                \\
					ARM \cite{arm-nips-2021}  & $58.90$    &$ 51.00$      & $74.10$      & $75.20$      & $64.80$                 \\
					VREx \cite{re-icml-2021} &$ 60.70$      &$ 53.00$      & $75.30$      & $76.60$       & $66.40$                \\
					RSC \cite{rsc-eccv-2020}			 &$ 60.70$     & $51.40$       &$ 74.80$      & $75.10$     &$ 65.50$                \\
					\midrule
					
					CMCL  & $\textbf{67.22}$   & $\textbf{57.88}$ & $\textbf{78.47}$ & $\textbf{79.79}$ & $\textbf{70.84}$ \\
					\bottomrule[1.5pt]
				\end{tabular}
				\footnotesize
				The best result is in bold face. Underlined ones represent the second-best results.
	\end{threeparttable}}}
\end{table}

\subsubsection{Evaluation on miniDomainNet}
We additionally carry out experiments on a large-scale dataset, miniDomainNet and report results in Table \ref{table:miniDomainNet}. For a fair comparison, we cite the results of comparison methods from a published work \cite{causalityinspired-cvpr-2022}.
We can observe that CMCL achieves the best performance on all domains and outperform the second-best method by a large margin of 3.22\% on average accuracy. Our method obtains a more significant improvement on the baseline when the dataset gets larger, which further proves the superiority of CMCL. 
\begin{table}[t]
	\renewcommand{\arraystretch}{1.2}
	\centering
	{
		\caption{Performance comparison on miniDomainNet.}
		\label{table:miniDomainNet}
		\scalebox{1.08}{
			\begin{threeparttable}
				\begin{tabular}{l c c c c c}
					\toprule[1.5pt]
					\multicolumn{1}{l}{Dataset} & \multicolumn{5}{c}{miniDomainNet} \\
					\midrule
					\multicolumn{1}{l}{Backbone} & \multicolumn{5}{c}{ResNet-18} \\
					\midrule
					Domain &   Clipart  &  Painting  &  Real   &   Sketch   &   Avg.  \\
					\midrule 
					\multicolumn{6}{c} {\it{w/ Prior Knowledge about Domain Shift}} \\
					\midrule
					
					SagNet	\cite{sagnet-cvpr-2021}  & $65.00$         & $58.10$          & $64.20$          & $58.10$          & $61.35$  \\
					\midrule
					\multicolumn{6}{c} {\it{w/o Prior Knowledge about Domain Shift}} \\
					\midrule
					DeepAll \cite{erm-tnn-1999}   & $65.50$         & $57.10$          & $62.30$          & $57.10$          & $60.50$         \\
					DRO \cite{dro-arxiv-2019}   & $64.80$         & $57.40$          & $61.50$          & $56.90$          & $60.15$  \\
					Mixup \cite{mixup-iclr-2018} & $\underline{67.10}$         & $59.10$          & $64.30$          & $59.20$          & $62.42$  \\
					MLDG  \cite{mldg-aaai-2018} & $65.70$         & $57.00$          & $63.70$          & $58.10$          & $61.12$  \\
					CORAL \cite{coral-iccv-2019}  & $66.50$         & $\underline{59.50}$          & $\underline{66.00}$          & $\underline{59.50}$          & $\underline{62.87}$  \\
					MMD \cite{mmd-icml-2015}  & $65.00$         & $58.00$          & $63.80$          & $58.40$          & $61.30$  \\
					MTL \cite{mtl-jmlr-2021} & $65.30$         & $59.00$          & $65.60$          & $58.50$          & $62.10$  \\

					\midrule
					CMCL  & $\textbf{69.34}$   & $\textbf{63.31}$ & $\textbf{68.55}$ & $\textbf{63.17}$ & $\textbf{66.09}$ \\
					\bottomrule[1.5pt]
				\end{tabular}
				\footnotesize
				The best result is in bold face. Underlined ones represent the second-best results. 
	\end{threeparttable}}}
\end{table}

\subsection{Further Analysis}
In this subsection, we conduct a series of experiments to further analyze our method.

\subsubsection{Rationality of Assumption \ref{ass:mda}}
In Subsection \ref{sec:objective formalization}, an assumption, {\it i.e.},  the ground-truth marginal distribution lies in the convex hull of source domains, is proposed as the basis of problem formalization. 
Under this assumption, a test domain sampled from the ground-truth distribution with a selection bias is naturally near to the ground-truth distribution and lies inside of the convex hull.
Here we empirically analyze the rationality of the assumption. As shown at the second row in Figure~\ref{fig:datasets}~(a), MNIST-M is obviously different from other domains, the domain shift in it is obviously different from that of others. MNIST-M probably does not lie inside of the convex hull of other domains, which means that the assumption is not well met. From Table \ref{table:Digits-DG}, we can observe that all reported domain generalization methods perform worst in MNIST-M among all test domains. Hence we can conclude that Assumption \ref{ass:mda} is necessary and reasonable for distribution alignment based domain generalization.

\subsubsection{Effectiveness of Each Component of CMCL}
We discuss the effectiveness of $\mathcal{L}_\text{mean}$ and $\mathcal{L}_\text{cov}$ in Eq.(\ref{mm}), Mean Classifier in Eq.(\ref{cls_mean}) and EMA in Eq.(\ref{mu}). The results on PACS with ResNet-18 as the feature extractor are reported in Table \ref{table:components}. As shown in Table \ref{table:components}, we can observe that removing any component of CMCL can lead to significant performance degradation, demonstrating the effectiveness of our design. 

\begin{table}[t]
	\renewcommand{\arraystretch}{1.2}
	\centering
	{
		\caption{Impact of different components on performance of our CMCL.}
		\label{table:components}
		\scalebox{0.9}{
			\begin{threeparttable}
				\begin{tabular}{l c c c c c}
					\toprule[1.5pt]
					\multicolumn{1}{l}{Dataset} & \multicolumn{5}{c}{PACS} \\
					\midrule
					\multicolumn{1}{l}{Backbone} & \multicolumn{5}{c}{ResNet-18} \\
					\midrule
					\multicolumn{1}{l}{Domain}  &   Art  & Cartoon   &   Photo   &  Sketch    &   Avg.   \\
					\midrule
					DeepAll\cite{dger-nips-2020}    & $78.93$  & $75.02$ & $\textbf{96.60}$  & $70.48$ & $80.25$        \\
					CMCL w/o  $\mathcal{L}_\text{mean}$ \& $\mathcal{L}_\text{cov}$   & $83.02$         & $77.63$          & $94.63$          & $81.45$          & $84.18$         \\
					CMCL w/o  $\mathcal{L}_\text{mean}$   & $83.51$         & $78.07$          & $94.87$          & $80.73$          & $84.29$         \\
					CMCL w/o  $\mathcal{L}_\text{cov}$   & $\underline{83.74}$         & $\underline{80.05}$          & $94.87$          & $81.52$          & $\underline{85.05}$         \\
					CMCL w/o  Mean Classifier   & $84.03$         & $79.11$          & ${94.33}$          & $81.78$          & $84.81$         \\
					CMCL w/o  EMA  & $83.33$         & $79.12$          & $93.83$          & $\underline{82.80}$          & $84.77$         \\
					CMCL    &$\textbf{84.55}$   & $\textbf{80.08}$ & $\underline{94.95}$ & $\textbf{82.86}$ & $\textbf{85.61}$  \\
					\bottomrule[1.5pt]
				\end{tabular}
				\footnotesize
				The best result is in bold face. Underlined ones represent the second-best results. 
	\end{threeparttable}}}
\end{table}


\begin{figure}[!htbp]
	\setlength{\abovecaptionskip}{0pt}
	\setlength{\belowcaptionskip}{0pt}
	\renewcommand{\figurename}{Figure}
	\centering
        \subfigure[$\lambda_2 = 100$]
	{\includegraphics[width=0.225\textwidth]{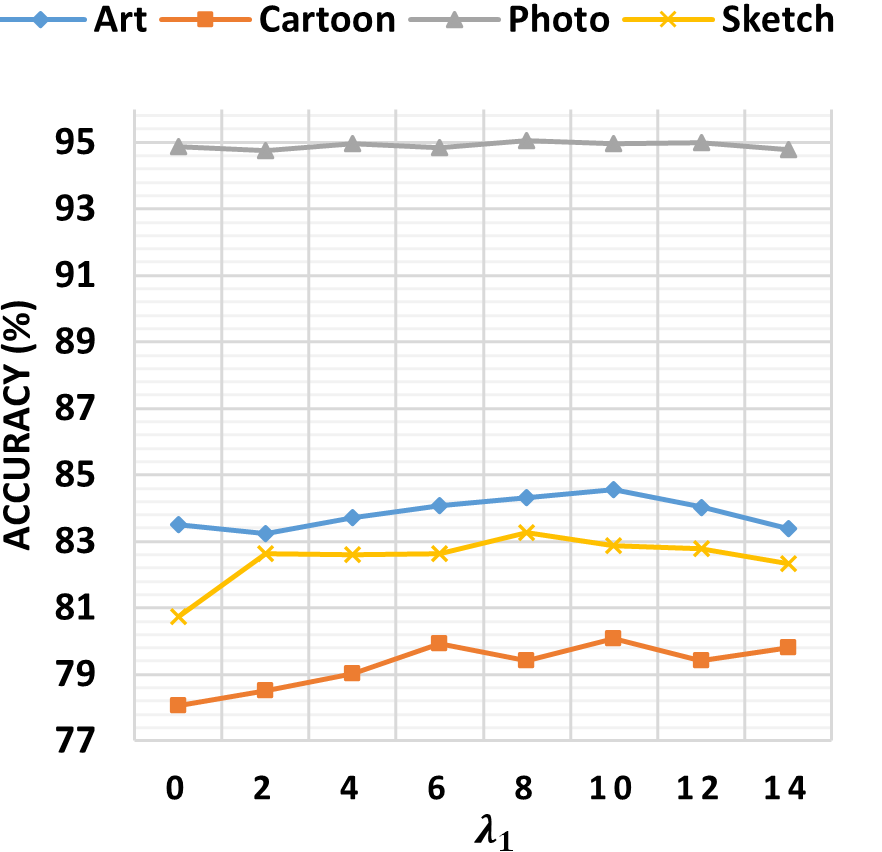}}
         \subfigure[$\lambda_1 = 10$]
	{\includegraphics[width=0.225\textwidth]{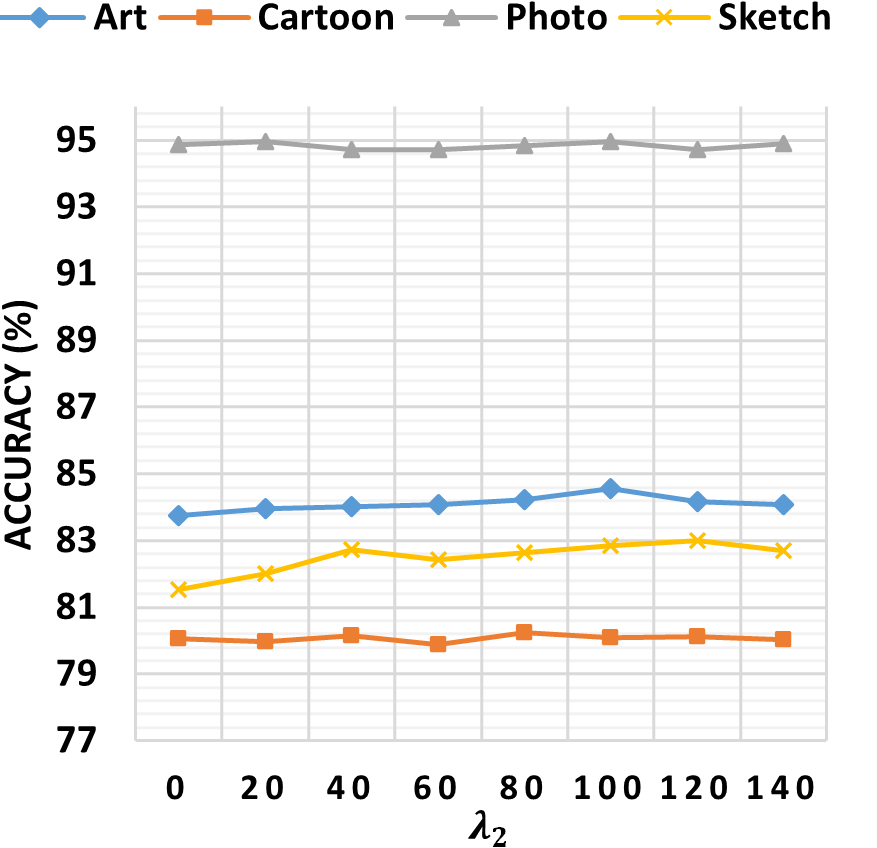}}
 	\caption{{Parameter sensitivity analysis with respect to $\lambda_1$ and $\lambda_2$  on PACS datasets.}}
	\label{fig:parameter}
\end{figure}

\begin{figure}[!htbp]
	\setlength{\abovecaptionskip}{0pt}
	\setlength{\belowcaptionskip}{0pt}
	\renewcommand{\figurename}{Figure}
	\centering
        \subfigure[Art]
	{\includegraphics[width=0.225\textwidth]{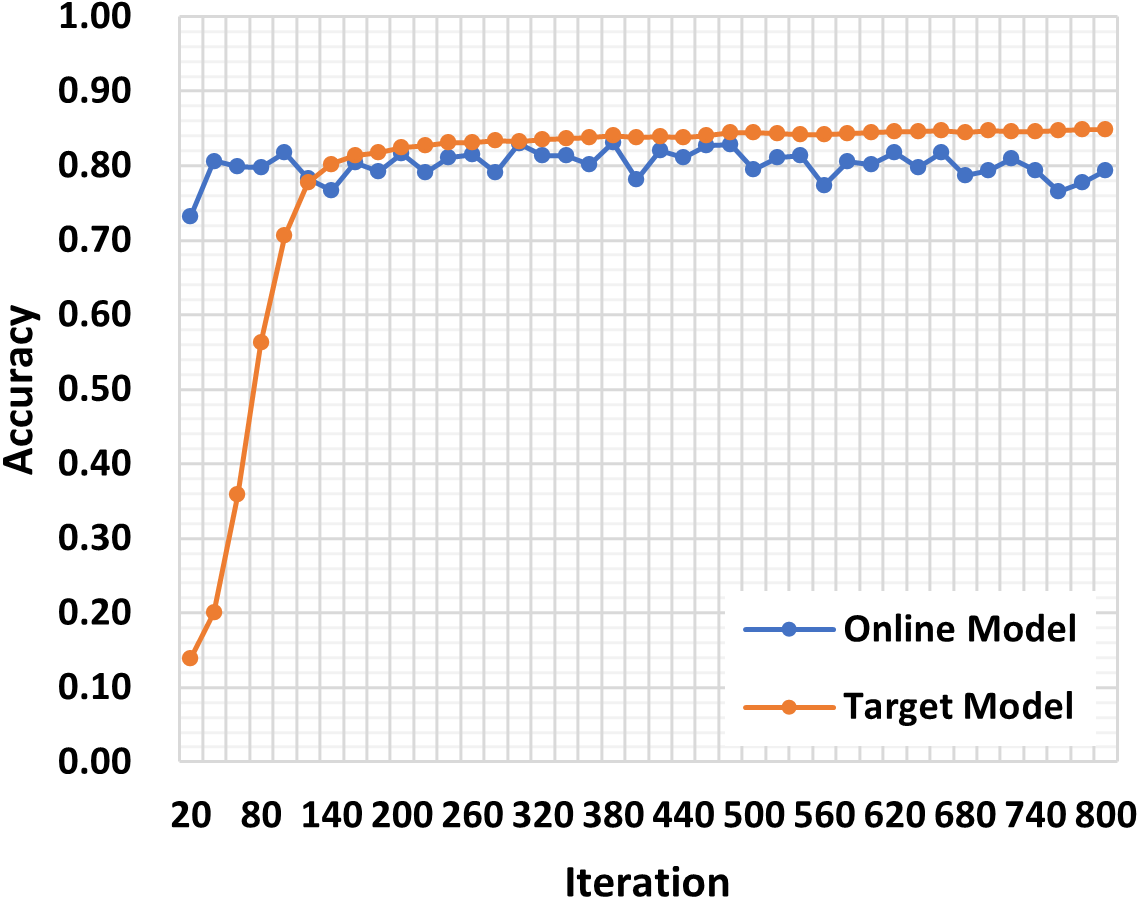}}
         \subfigure[Cartoon]
	{\includegraphics[width=0.225\textwidth]{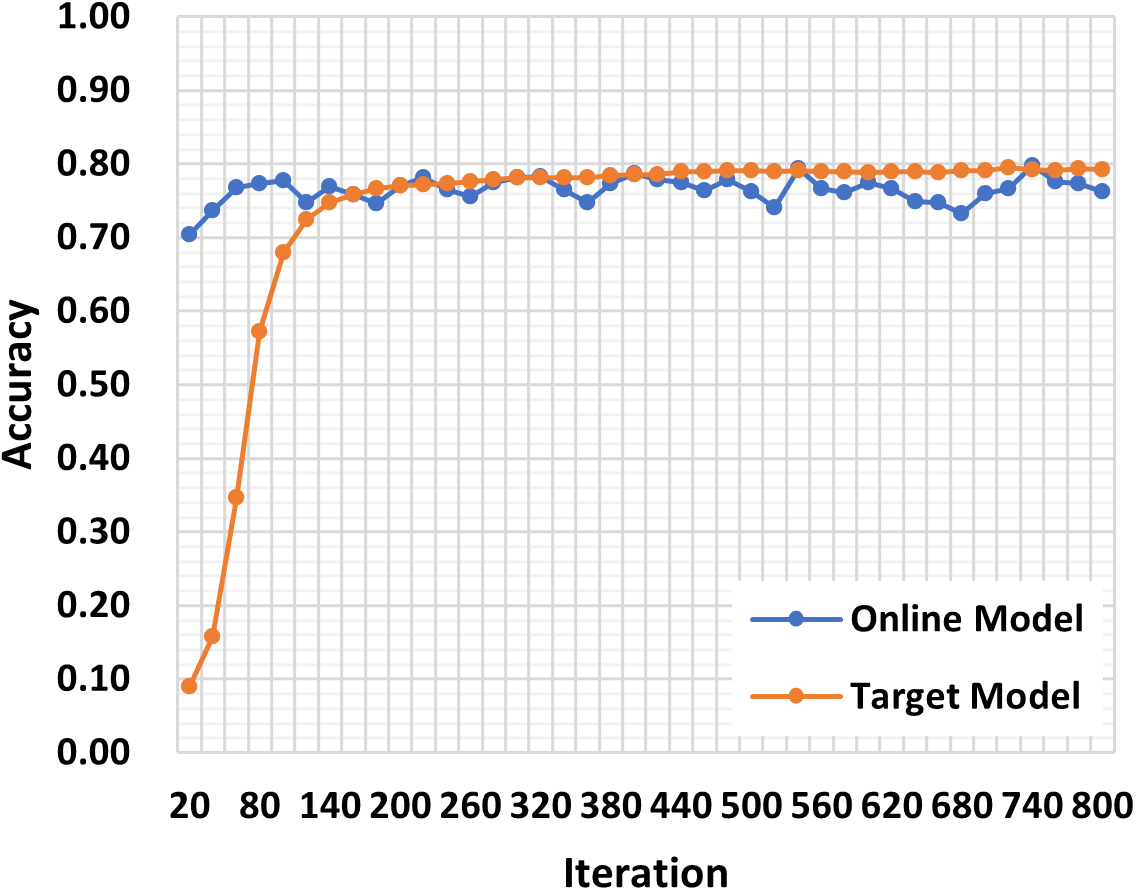}}

        \subfigure[Photo]
	{\includegraphics[width=0.225\textwidth]{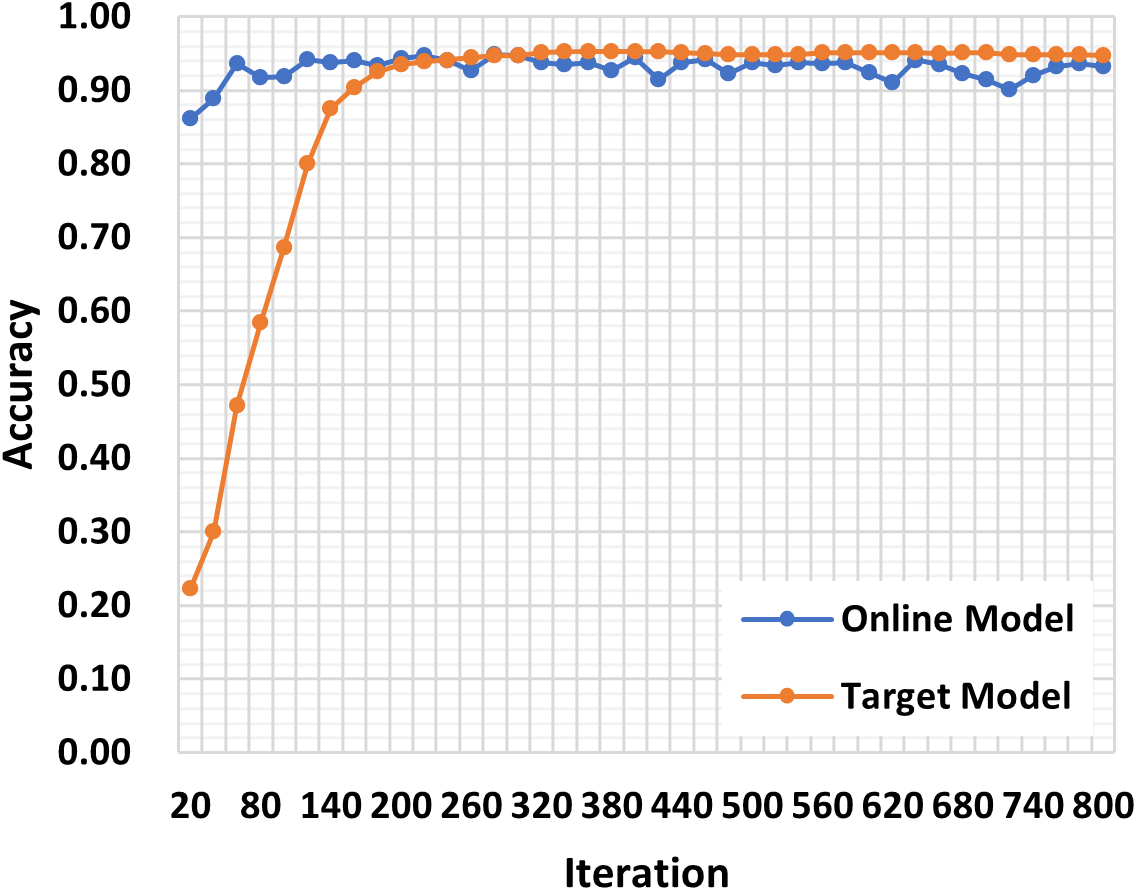}}
         \subfigure[Sketch]
	{\includegraphics[width=0.225\textwidth]{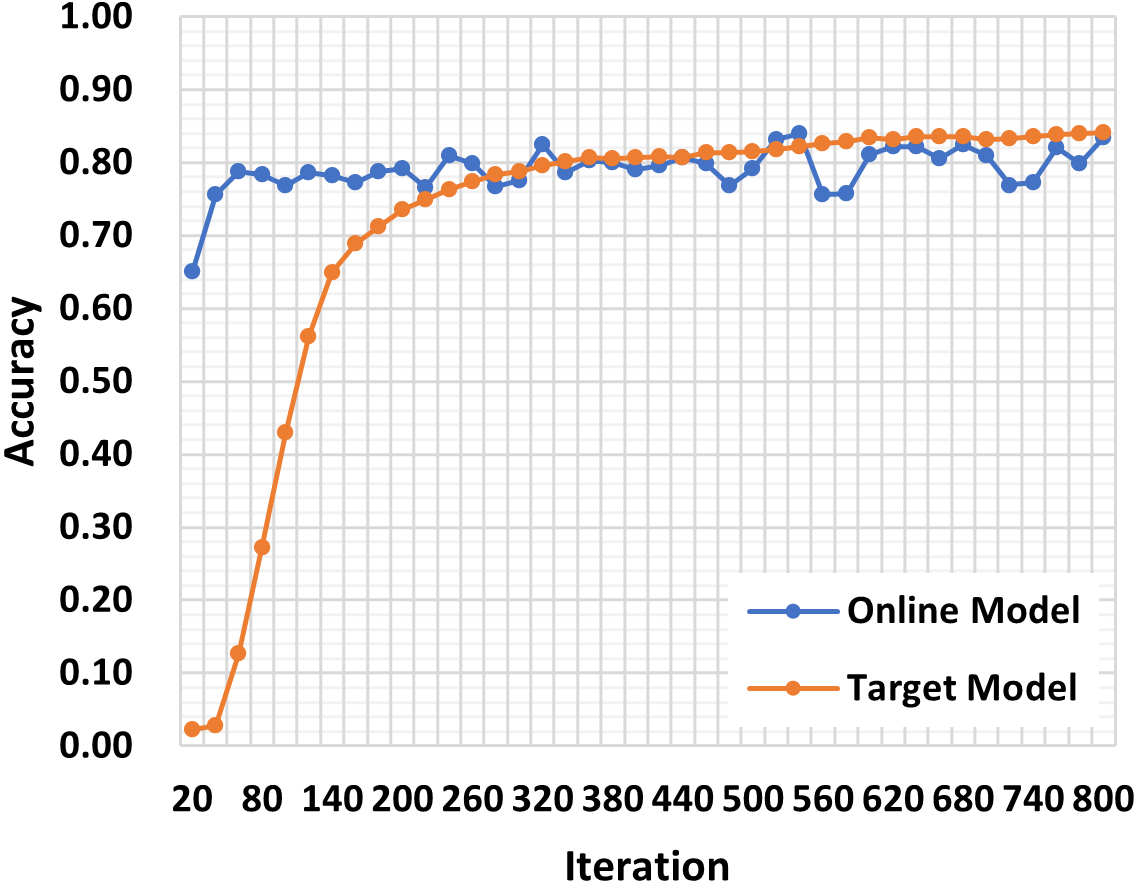}}
	\caption{{Training accuracy curves of online model and target model on PACS datasets.}}
 
	\label{fig:curve}
\end{figure}


CMCL w/o  $\mathcal{L}_\text{mean}$ \& $\mathcal{L}_\text{cov}$ is a variant which optimizes the objective in Eq.(\ref{final optimization objective}) without considering the constrained condition of marginal distribution alignment. 
Unless the marginal distributions of source domains are naturally aligned, the optimization objective of CMCL w/o  $\mathcal{L}_\text{mean}$ \& $\mathcal{L}_\text{cov}$ is obviously different from CMCL. The former only considers minimizing the KL-divergence between domain-specific posterior distributions given samples from source domains, but the latter tries to minimize the KL-divergence given any samples from the real world, which is more general. We can see that the latter works better on PACS. 

As shown in Eq.(\ref{mm}), we adopt $\lambda_1$ and $\lambda_2$ to control the penalty intensity of $\mathcal{L}_\text{mean}$ and $\mathcal{L}_\text{cov}$ respectively. To illustrate the significance of $\lambda_1$ and $\lambda_2$, parameter sensitivity analysis is also conducted  as shown in Figure \ref{fig:parameter}.
We update the global classifier $W^g$ by the mean of domain-specific classifiers, denoted as Mean Classifier, at Stage B. As the training progresses, the domain-specific classifiers tend to be consistent. So the Mean Classifier is a reasonable prediction of the convergence of domain-specific classifiers. From Table \ref{table:components}, we can see that Mean Classifier makes a significant contribution to the final performance. 
As demonstrated in Subsection \ref{sec:AO}, EMA helps to improve the generalization of the trained model and the stability of the training process. The former can be verified by CMCL w/o EMA in Table \ref{table:components}. The latter is further illustrated in Figure \ref{fig:curve}. The training accuracy curves of target models which are updated by ERM are smoother than that of online models.

\subsubsection{Effectiveness of Maximum In-Domain Likelihood}
\begin{table}[t]
	\renewcommand{\arraystretch}{1.2}
	\centering
	{
		\caption{Performance comparison among different variants on PACS datasets.}
		\label{table:variants}
		\scalebox{1}{
			\begin{threeparttable}
				\begin{tabular}{l c c c c c}
					\toprule[1.5pt]
					\multicolumn{1}{l}{Dataset} & \multicolumn{5}{c}{PACS} \\
					\midrule
					\multicolumn{1}{l}{Backbone} & \multicolumn{5}{c}{ResNet-18} \\
					\midrule
					Domain  &   Art  & Cartoon   &   Photo   &  Sketch    &   Avg.   \\
					\midrule
					CMCL-KL \quad \quad \quad \quad \ \   & $83.36$         & $\underline{79.32}$          & $94.49$          & $80.76$          & $84.48$         \\
					E2E-KL   & $\underline{83.53}$         & $79.31$          & $\textbf{95.01}$          & $\underline{80.77}$          & $\underline{84.66}$         \\
					CMCL    &$\textbf{84.55}$   & ${\textbf{80.08}}$ & $\underline{94.95}$ & $\textbf{82.86}$ & $\textbf{85.61}$  \\
					\bottomrule[1.5pt]
				\end{tabular}
				\footnotesize
				The best result is in bold face. Underlined ones represent the second-best results. 
	\end{threeparttable}}}
\end{table}

The term of maximum in-domain likelihood in Eq.(\ref{fml:original-objective}) is proposed to eliminate the side-effect of KL-divergence as mentioned in Section \ref{sec:Maximum In-Domain Likelihood}, and then the constrained maximum cross-domain likelihood optimization problem can be deduced.
To evaluate the effectiveness of the term of maximum in-domain likelihood, CMCL-KL, a variant of CMCL, is constructed by removing the term of maximum in-domain likelihood in Eq.(\ref{fml:original-objective}). CMCL-KL is also optimized by an alternating optimization strategy. Specifically, the first term of Eq.(\ref{cdl}) is removed and the KL-divergence between different domain-specific posterior distributions, which are parameterized by frozen softmax layers at Stage C, is directly minimized. For a fair comparison, the other components of CMCL keep unchanged.
From Table \ref{table:variants}, we can observe that CMCL-KL performs worse than CMCL even though CMCL-KL is intuitively plausible. 
The optimization objective of maximum cross-domain likelihood of CMCL, which is strictly deduced from the original general objective with the term of maximum in-domain likelihood instead of intuitive motivation, has a solid theoretical basis and works well as expected.

\subsubsection{Effectiveness of Alternating Optimization Strategy}
In order to demonstrate the effectiveness of the proposed alternating optimization strategy, E2E-KL is proposed, which directly optimizes the original objective in Eq.(\ref{fml:original-objective}) in an end-to-end manner. For a fair comparison, the other components of CMCL keep unchanged, including marginal distribution alignment.
From Table \ref{table:variants}, we can observe that CMCL works better than E2E-KL. Although the optimization objective of CMCL is deduced from that of E2E-KL, an appropriate optimization strategy has a significant impact on the final performance due to the non-convexity of the neural networks.

\subsubsection{Feature Visualization}
To qualitatively assess the ability of CMCL in learning the domain-invariant classifier, we visualize the distribution of the learned features using t-SNE \cite{t-sne} in Figure \ref{fig:visual}. 
Comparing the feature distribution of DeepAll and our CMCL, we can observe that DeepAll has a satisfactory capacity for learning discriminative features, but fails to align distributions of different domains. Source domain features and test domain features derived by CMCL are aligned better than that of DeepAll, and the learned features are separated better according to their semantic categories. 
\begin{figure}[t]
	\setlength{\abovecaptionskip}{0pt}
	\setlength{\belowcaptionskip}{0pt}
	\renewcommand{\figurename}{Figure}
	\centering
	\includegraphics[width=0.49 \textwidth]{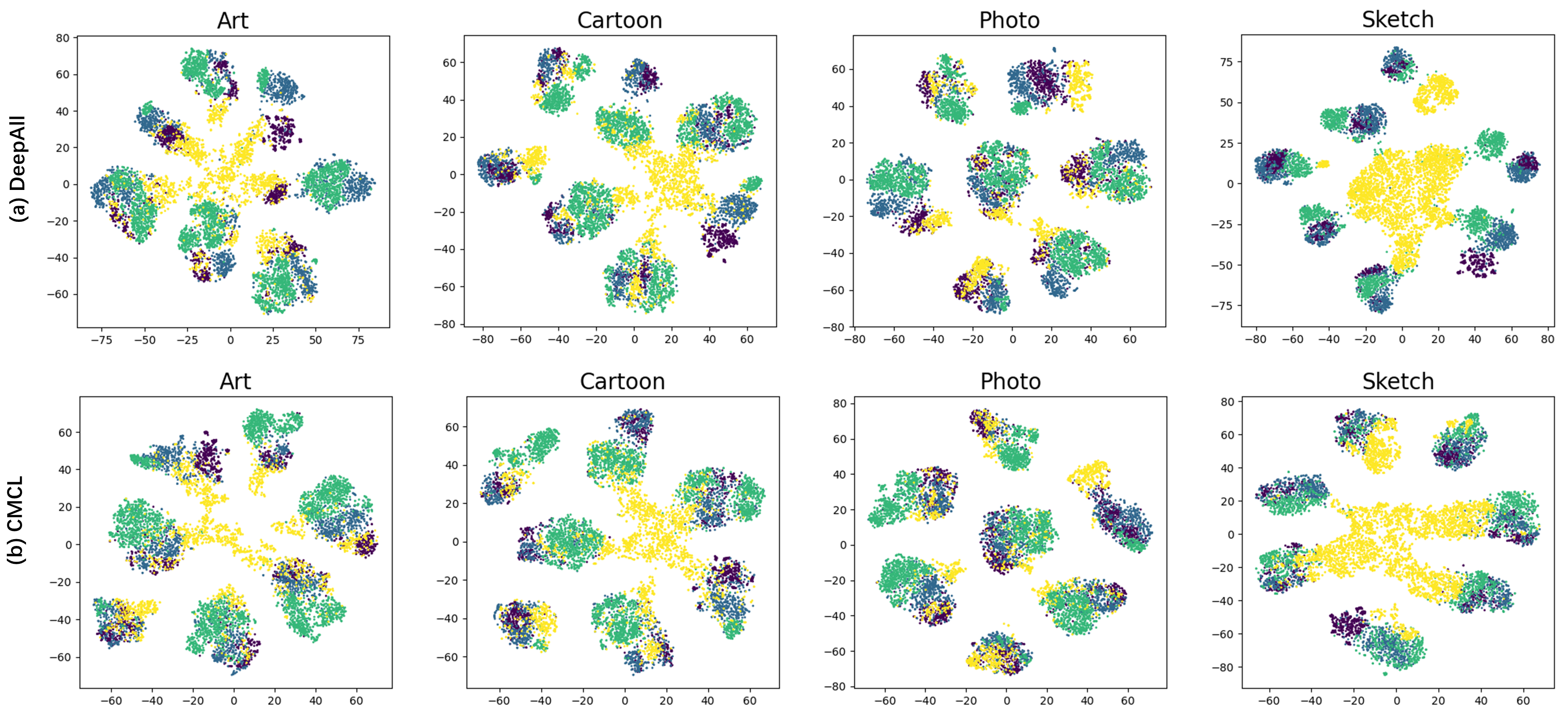}
	\caption{{t-SNE visualization on PACS with ResNet-18 as the feature extractor. The yellow points denote features of the test domain and others are that of source domains. (a) Features extracted by the baseline approach of DeepAll. (b) Features extracted by proposed CMCL. }}
	\label{fig:visual}
\end{figure}
We can see that though the class-conditional distributions are not exactly matched, CMCL still achieves an excellent performance due to the learned domain-invariant representation space where optimal domain-specific classifiers are nearly the same, which is the original motivation of CMCL. The visualization further proves the effectiveness of CMCL in learning domain generalizable features and domain-invariant classifiers.

\section{Conclusions} \label{section:conclusion}
In this paper, a novel domain generalization method named CMCL is proposed to learn generalizable and discriminative representations via constrained maximum cross-domain likelihood.
We firstly formalize an optimization problem in which an expectation of KL-divergence between posterior distributions is minimized. In this original formalization, the expectation is computed on the ground-truth marginal distribution for better generalizability.
We propose a term of maximum in-domain likelihood to eliminate the side-effect of KL-divergence, {\it i.e.}, entropy increase. Furthermore, a constraint of marginal distribution alignment is proposed to approximate the ground-truth marginal distribution with source domains under a convex hull assumption. Finally, a more concise and effective constrained maximum cross-domain likelihood optimization problem is deduced.
The constrained optimization problem is transformed into an unconstrained one by adding a penalty term and approximately solved by an alternating optimization strategy. CMCL naturally realizes the joint distribution alignment by solving this optimization problem.
Comprehensive experiments on four datasets demonstrate that our method can obtain excellent domain generalization performance.

In this work, we propose an important convex hull assumption, under which the domain-invariant classifier could generalize to unseen domains. 
In the future, generative methods can be coupled with CMCL to diversify training domains so that the marginal distribution of the real world is more likely to be located in the convex hull of that of training domains.

\section*{Acknowledgment}
The authors are thankful for the financial support by the Key-Area Research and Development Program of Guangdong Province 2019B010153002, the National Natural Science Foundation of China (62106266, U1936206).

\end{document}